\pgfplotsset{compat=1.17}
\newcommand\eg{e.\,g.}
\newcommand\ie{i.\,e.}
\newcommand\st{s.\,t.}
\newcommand\wrt{w.\,r.\,t.}
\newtheorem{asm}{Assumption}
\newtheorem{dfn}{Definition}
\newtheorem{lem}{Lemma}
\newtheorem{thm}{Theorem}
\newtheorem{cor}{Corollary}
\newcommand\sz[1]{\lvert#1\rvert}
\newcommand\norm[1]{\lVert#1\rVert}
\newcommand\set[1]{\ensuremath{\mathcal{#1}}}
\renewcommand\vec[1]{\ensuremath{\boldsymbol{#1}}}
\newcommand\RR{\mathbb R}
\newcommand\T{\mathsf T}
\newcommand\sX{\set X}
\newcommand\sY{\set Y}
\newcommand\sH{\set H}
\newcommand\sD{\set D}
\newcommand\sP{\set P}
\newcommand\sS{\set S}
\newcommand\sT{\set T}
\newcommand\sF{\set F}
\newcommand\sG{\set G}
\newcommand\sW{\set W}
\newcommand\vx{\vec x}
\newcommand\vy{\vec y}
\newcommand\vA{\vec A}
\newcommand\vb{\vec b}
\newcommand\vu{\vec u}
\newcommand\vw{\vec w}
\renewcommand\leq{\leqslant}
\renewcommand\geq{\geqslant}
\newcommand\loss{\ensuremath{\mathrm{Loss}}}
\def\(#1\){\begin{align}#1\end{align}}
\title{Hierarchical Partitioning Forecaster}
\author{
Christopher Mattern\\
\texttt{cmattern@deepmind.com}
}
\date{12 May 2023}
\begin{document}

\maketitle

\begin{abstract}
In this work we consider a new family of algorithms for sequential prediction, Hierarchical Partitioning Forecasters (HPFs).
Our goal is to provide appealing theoretical - regret guarantees on a powerful model class - and practical - empirical performance comparable to deep networks - properties \emph{at the same time}.
We built upon three principles:
hierarchically partitioning the feature space into sub-spaces, blending forecasters specialized to each sub-space and learning HPFs via local online learning applied to these individual forecasters.
Following these principles allows us to obtain regret guarantees, where Constant Partitioning Forecasters (CPFs) serve as competitor.
A CPF partitions the feature space into sub-spaces and predicts with a fixed forecaster per sub-space.
Fixing a hierarchical partition \sH\ and considering any CPF with a partition that can be constructed using elements of \sH\ we provide two guarantees:
first, a generic one that unveils how local online learning determines regret of learning the entire HPF online;
second, a concrete instance that considers HPF with linear forecasters (LHPF) and exp-concave losses where we obtain $O(k \log T)$ regret for sequences of length $T$ where $k$ is a measure of complexity for the competing CPF.
Finally, we provide experiments that compare LHPF to various baselines, including state of the art deep learning models, in precipitation nowcasting.
Our results indicate that LHPF is competitive in various settings.
\end{abstract}

\section{Introduction}

\paragraph{Background.}
Sequential prediction is an important problem in machine learning.
State of the art machine learning methods are driven by experimental analysis, by improving existing network architectures, introducing paradigms on network design and all that taking the advantage of the massively increasing memory and compute resources to push the limits of model capacity.
From an empirical point of view the machine learning approach is extremely successful, however, this comes at a downside - there is no guarantee if and how good performance carries forwards.
This holds true especially in the light of the IID assumptions at the heart of the typical machine learning training procedures and often causes trouble when we face data that is out of distribution relative to training and validation data.
This leads to a disconnect between experimental and theoretic research:
Increasing model capacity plays a major role in experimental research, yet a minor role in theoretical research (highly complex models in a real-world setting hardly allow for a theoretic analysis);
handling out of sample data plays a significant role in theoretical research (simple models in a well-defined setting allow for a theoretical analysis), yet a not so significant one in experimental research.
In this work we want to bridge the gap between experimental and theoretic research by considering a non-trivial model that can provide performance comparable to high capacity models and at the same time satisfies theoretic guarantees. 

The family of Gated Linear Networks (GLNs) \cite{veness2020gln, veness2020ggln} obeys similar characteristics and is similar in spirit.
GLNs are composed of layers of gated neurons.
The input to each neuron is considered a set of predictions and a neuron outputs a combined and (hopefully) more accurate prediction.
This process carries forward across layers.
Combining predictions involves gating based on side-information and allows for specializing neurons to that side-information;
learning does not rely on back-propagation, rather each neuron learns the global optimization goal online and in isolation using Online Gradient Descent \cite{zinkevich2003ogd}.
GLNs provide good empirical performance in various settings and enjoy theoretic guarantees on model capacity.
However, there is a downside:
there are no regret guarantees for GLNs that hold for individual sequences.

In this work we focus on sequential prediction and adopt local online learning and specialization to side-information, yet exploit them differently compared to GLNs.
(Note that conceptually we do not need to distinguish between features and side-information, rather we view side-information as a part of features.)
For specialization we partition the feature space into sub-spaces and repeat this recursively, ultimately this induces a \emph{hierarchical partition} of the feature space.
Every sub-space, \emph{segment}, of this hierarchical partition has a specialized forecaster.
To compute a forecast we recursively combine the predictions of the specialized forecasters while we work through the hierarchical partition starting from the highest degree of specialization.
This procedures gives rise to the family of Hierarchical Partitioning Forecasters (HPFs).
We also provide an online local-learning procedure for learning HPFs that guarantees low regret \wrt\ an idealized predictor that uses a feature space partitioning based on the involved hierarchical partitioning with fixed predictors, both chosen optimal in hindsight, for individual sequences.
So unlike GLNs our approach allows for guarantees for individual sequnences.
We found that the key to provide these regret guarantees is to rely on the hierarchical structure.
(As of now for GLNs, where gating does not exploit any structure across neurons, and, is arbitrary in this sense, there was no fruitful attempt to obtain such regret guarantees.)
This reasoning has its roots in Context Tree Weighting and its successors \cite{willems1995ctw,willems1998ctw,veness2012cts}.

\paragraph{Outline and Contribution.}
In the remainder of this work we first introduce some general notation and definitions in Section~\ref{sec:notation}.
We then present our main contributions:

First, in Section~\ref{sec:meta-hpf}, we propose a meta-algorithm for learning HPFs that employs learning algorithms (learners) for individual forecasters specialized to the segments of a hierarchical partition and we provide a regret analysis.
Overall the setting is very generic:
we neither provide concrete learners, nor we provide which (class of) forecasters we consider, rather we think of these as templates that come with a regret guarantee.
Our regret analysis then links the meta-algorithm's total regret to the regret of learning the structure of an arbitrary competing partition and to the regret of learning the forecasters specialized to segments of that partition.

Second, in Section~\ref{sec:lhpf}, we furthermore consider a concrete instance of the meta-algorithm that learns HPFs with linear functions as forecasters - Linear HPF (LHPF) given exp-concave loss functions (\eg\ log loss, MSE, see \cite{hazan2006ons} for more examples).
Here learners are based on second order online optimization \cite{hazan2006ons} and a generalization of Switching \cite{erven2007switching}, an ensembling technique related to Fixed Share \cite{herbster1995fixedshare}.

Third, in Section~\ref{sec:experiments}, we provide a short experimental study on a topic that became popular among the deep learning community, short-term precipitation forecasting (nowcasting).
We consider precipitation nowcasting in the UK as previously done in \cite{ravuri2021gan}.
Our results suggest that our learning algorithm for LHPFs provides results comparable to significantly more complex deep-learning models in various settings, yet we also highlight limitations.

We finally summarize our results and outline topics for future research in Section~\ref{sec:conclusion}.

\section{Basic Notation and Definitions}
\label{sec:notation}

\paragraph{General Notation.}
To ease reading of symbols we adopt the following typesetting conventions:
non-boldface symbols denote scalars ($x, X$), boldface lowercase symbols ($\vx$) denote column vectors, boldface uppercase symbols ($\vec X$) denote matrices, calligraphic symbols denote sets ($\set X$).
Some column vector $\vec x$ has components $(x_1, x_2, \dots)^\T$.
Let $\angle(\vec a, \vec b)$ denote the angle between vectors $\vec a$ and $\vec b$.
Let $\nabla_{\vec x} := (\frac\partial{\partial x_1}, \frac{\partial}{\partial x_2}, \dots)^\T$ denote the gradient operator \wrt\ \vx\ and let $\log := \log_e$ denote the natural logarithm.
We use $x_{1:t}$ to denote a sequence $x_1x_2 \dots x_t$ of objects (if $t=\infty$ the sequence has infinite length), define the shorthand $x_{<t} := x_{1:t-1}$ and say $x_{1:t}$ is a sequence over \sX, if $x_1, x_2, \dots\in\sX$.
For objects $x_a, x_b, \dots$ with labels $a, b, \dots$ from some set $\sS$ let $\{x_s\}_{s\in\sS}$ denote an indexed multiset of objects.

\paragraph{Segments and (Hierarchical) Partitions.}
A \emph{partition} $\sP$ of a non-empty set $\sX$ is a set of disjoint non-empty sets \st\ their union is $\sX$; an element of $\sP$ is called \emph{segment}.

\begin{dfn}\label{dfn:hierarchical-partition}
\sH\ is a \emph{hierarchical partition} of a non-empty set \sX\, if
\begin{enumerate}[label=(\roman*)]
    \item
    $\sH = \{\sX\}$ or
    \item
    $\sH = \sH_1 \cup \dots \cup \sH_n \cup \{\sX\}$, where $n\geq2$, $\{\sX_1, \sX_2, \dots, \sX_n\}$ is a partition of $\sX$ and $\sH_1, \sH_2, \dots, \sH_n$ are hierarchical partitions of $\sX_1, \sX_2, \dots, \sX_n$.
\end{enumerate}
We say 
\begin{itemize}
    \item
    \emph{partition \sP\ (of \sX) is induced by \sH}, if $\sP\subseteq\sH$;
    \item
    \emph{segment $\sS'$ divides segment \sS\ (``\sS\ is divisible'')}, for $\sS', \sS\in\sH$, if $\sS'\subset\sS$ and no segment $\sT\in\sH$ exists \st\ $\sS' \subset\sT\subset\sS$; 
    \item 
    \emph{segment $\sS\in\sH$ is indivisible}, if no segment from \sH\ divides \sS.
\end{itemize} 
\end{dfn}

\begin{figure}
    \centering
    \begin{tikzpicture}[
        every node/.style={
            draw,
            rectangle,
            rounded corners=3pt,
            line width=1pt,
            minimum width=50pt,
            minimum height=15pt,
            inner sep=5pt,
            align=center,
            font=\footnotesize},
        level 1/.style={sibling distance=150pt},
        level 2/.style={sibling distance=90pt},
        ]
        \node () {$[0, 1)$}
            child{ node {$[0, 0.7)$}
                child{ node[fill=gray!20] {$[0, 0.2)$} }
                child{ node[fill=gray!20] {$[0.2, 0.5)$}
                    child{ node {$[0.2, 0.3)$} }
                    child{ node {$[0.3, 0.5)$} }
                }
                child{ node[fill=gray!20] {$[0.5, 0.7)$} }
            }
            child{ node[fill=gray!20] {$[0.7, 1)$} }
        ;
    \end{tikzpicture}
    \caption{%
    Tree-analogy for hierarchical partition $\{[0, 1),\allowbreak [0, 0.7),\allowbreak [0.7, 1.0),\allowbreak [0, 0.2),\allowbreak [0.2, 0.5),\allowbreak [0.5, 0.7),\allowbreak [0.2, 0.3),\allowbreak [0.3, 0.5)\}$ of $[0, 1)$;
    nodes are labeled with corresponding segments.
    }
    \label{fig:hierarchical-partition}
\end{figure}

\noindent
Note that the concept of a hierarchical partition of a set resembles that of a tree where nodes are labelled with sets.
It is useful to keep this equivalence in mind throughout this work as it makes later proofs very intuitive.
Let us illustrate the equivalence (examples refer to Figure~\ref{fig:hierarchical-partition}):
A segment $\sS$ corresponds to a node and all segments dividing $\sS$ correspond to the children of that node, \eg\ all segments dividing $[0, 0.7)$ are $[0, 0.2)$, $[0.2, 0.5)$, $[0.5, 0.7)$, representing child nodes.
Consequently, if segment $\sS$ corresponds to node $u$ and segment $\sS'$ corresponds to node $u'$, then $\sS'\subset\sS$ translates to $u'$ is a descendant of $u$.
For instance $[0.2, 0.3) \subset [0, 0.7)$, hence represents a descendant.
Furthermore, divisible segments corresponds to internal nodes and indivisible segments corresponds to leaf nodes, \eg\ $[0, 1)$, $[0, 0.7)$ and $[0.2, 0.5)$ represent all internal nodes and the remaining nodes represent all leaf nodes.

We now further give examples for the terminology from Definition~\ref{dfn:hierarchical-partition}.
Segments $\{[0, 0.2), [0.2, 0.5), [0.5, 0.7), [0.7, 1)\}$ (highlighted in gray) form a partition of $[0, 1)$ which is a subset of the hierarchical partition from the above example, hence it is an induced partition.
We have $[0.2, 0.3)\subset[0.2, 0.5)\subset[0, 0.7)$, hence segment $[0.2, 0.3)$ doesn't divide $[0, 0.7)$, but $[0.2, 0.5)$ does.
There is no segment that divides $[0.2, 0.3)$, hence this segment is indivisible.

\paragraph{Sequential Forecasting, CPF and HPF.}
We consider \emph{sequential prediction for individual sequences}.
In this setting a \emph{forecaster} operates in rounds:
In a round $t$ we first observe \emph{features} $\vx_t$ from \emph{feature space} $\sX$ and then a forecaster construct a forecast $\vy_t$ from \emph{forecast space} $\sY$.
We represent the forecaster as a mapping $f:\sX\rightarrow\sY$, so $\vy_t = f(\vx_t)$.
Finally, a loss function $\ell:\sY\rightarrow\RR$ is revealed and the forecaster suffers loss $\ell(\vy_t)$, ending the round.
Typically forecasters and losses can vary with $t$.

We now formally define two forecasters that will play a key role in the following.

\begin{dfn}\label{dfn:cpf}
A \emph{Constant Partitioning Forecaster (CPF)} $f$ is given by a partition \sP\ of feature space \sX\ and by forecasters $\{f^{\sS}\}_{\sS\in\sP}$.
It forecasts $f(\vx) := f^\sS(\vx)$, for the unique segment $\sS\in\sP$ with $\vx\in\sS$.
\end{dfn}
\noindent
Observe that a CPF $f$ suffers total loss
\(
    \loss^f_T = \sum_{\sS\in\sP} \sum_{\substack{t:\vx_t\in\sS,\\1\leq t\leq T}} \ell_t(f^\sS(\vx_t))
    \text.
    \label{eq:cpf-loss}
\)

\begin{dfn}\label{dfn:hpf}
A \emph{Hierarchical Partitioning Forecaster (HPF)} $f$ is given by a hierarchical partition \sH\ of feature space \sX,
forecasters $\{f^{\sS}\}_{\sS\in\sD}$ with signature $f^{\sS}:\sX\times\sY\rightarrow\sY$, for the subset $\sD$ of divisible segments from \sH,
forecasters $\{g^{\sS}\}_{\sS\in\sH\setminus\sD}$ with signature $g^{\sS}:\sX\rightarrow\sY$ for the subset $\sH\setminus\sD$ of indivisible segments from \sH.
It forecasts $f(\vx) := h^{\sX}(\vx)$, where we recursively define
\(
    h^{\sS}(\vx) :=
    \begin{cases}
        g^{\sS}(\vx)\text, & \text{if $\vx\in\sS$ and $\sS$ is indivisible,}\\
        f^{\sS}(\vx, h^{\sS'}(\vx))\text, & \text{if $\vx\in\sS'$ and $\sS'$ divides $\sS$,}\\
        \text{undefined}, & \text{if $\vx\notin\sS$.}
    \end{cases}
    \label{eq:hpf-recursion}
\)
\end{dfn}

\paragraph{Sequantial Learning.}
Sequential learning is a natural mechanism to choose forecasters.
We now formalize this approach for our purposes, since we will later rely on it.

\begin{dfn}\label{dfn:sequential-learner}
A \emph{Sequential Learner} $L: s, f, \ell, \vx \mapsto s', f'$ maps a state $s$, a forecaster $f$, a loss function $\ell$ and a feature vector \vx\ to forecaster $f'$ and state $s'$.
For a sequence of loss functions $\ell_1, \ell_2, \dots$ and feature vectors $\vx_1, \vx_2, \dots$, initial forecaster $f_1$ and initial state $s_1$ the sequential learner generates a trajectory of forecasters $f_2, f_3, \dots$ defined by $(s_{t+1}, f_{t+1}) := L(s_t, f_t, \ell_t, \vx_t)$, for $t\geq1$.
We say $L$ has parameters $s_1$ and $f_1$.
\end{dfn}
\noindent
A sequential learner $L$ attempts to perform close to a desirable forecaster $f$ in the sense that the excess total loss - \emph{regret} - of the generated forecasters $f_1, f_2, \dots$ over $f$ grows sublinearily,
\(
    \sum_{\mathclap{1\leq t\leq T}}\mkern4mu(c_t(f_t) - c_t(f)) = o(T)\text{, where $c_t(f) := \ell_t(f(\vx_t))$}
    \text.
    \label{eq:learner-cost}
\)
To ease later proofs we say \emph{$f$ has regret at most $R(T, f_1, f)$ under sequential learner $L$}, if for the forecasters $f_1, f_2, \dots$ generated by $L$ we have
\(
    \sum_{1\leq t\leq T}(c_t(f_t) - c_t(f)) \leq R(T, f_1, f)
    \text.
    \label{eq:regret-bound}
\)

\section{Meta-Algorithm}
\label{sec:meta-hpf}

\subsection{Learning HPFs Sequentially}
Algorithm~\ref{fig:hpf} sequentially generates a sequence of HPFs.
In round $t$ the HPF parameters are updated by applying sequential learning to forecasters $f^\sS$ and $g^\sS$ that were involved in computing the current rounds forecast $f_t(\vx_t)$ (\ie\ for all $\sS$ that contain $\vx_t$, cf. \eqref{eq:hpf-recursion}).
All other forecasters remain unchanged.
Learning is \emph{local}, there is no backpropagation.
As we will see in the next section local learning enables a regret analysis.

\begin{alg}
\centering
\small
\fboxsep=0.4\baselineskip
\fbox{\begin{minipage}{0.9\textwidth}
    \noindent
    \begin{tabular}{@{}l@{}@{\hskip2pt}c@{\hskip2pt}p{0.8\columnwidth}@{}}
        \textbf{Input}&\textbf:&
        Hierarchical partition $\sH$ with divisible segments $\sD$,\newline
        sequential learner $L_0$ with parameters $\{(u^{\sS}_1, f^{\sS}_1)\}_{\sS\in\sD}$,\newline
        sequential learner $L_1$ with parameters $\{(v^{\sS}_1, g^{\sS}_1)\}_{\sS\in\sH\setminus\sD}$,\newline
        features $\vx_{1:T}$ and
        loss functions $\ell_{1:T}$.
        \\[0.2\baselineskip]
        \textbf{Output}&\textbf:& Predictions $\vy_{1:T}$.
        \\[0.2\baselineskip]
    \end{tabular}
    \\[.5\baselineskip]
    For $t = 1, 2, \dots, T$ do:
    \begin{enumerate}[label=\arabic*.,ref=\arabic*]
        \item
        Consider HPF $f_t$ with parameters $(\sH, \{f_t^\sS\}_{\sS\in\sD}, \{g_t^\sS\}_{\sS\in\sH\setminus\sD})$.
        \item
        Observe $\vx_t$ and output prediction $\vy_t = f_t(\vx_t)$.
        \item
        Observe $\ell_t$ and update learner parameters, for all $\sS\in\sH$:
        \begin{enumerate}[label=\textit{Case~\theenumi\alph*:},ref=\theenumi\alph*,wide]        
            \item
            $\vx_t\in\sS$ and $\sS$ is indivisible.
            \(
                (v^{\sS}_{t+1}, g^\sS_{t+1})
                = L_1(v^{\sS}, g_t^\sS, \ell_t, \vx_t)
                \text.
            \)
            \item
            $\vx_t\in\sS'$ where $\sS'\in\sH$ divides $\sS$.
            \(
                (u^{\sS}_{t+1}, f^\sS_{t+1})
                = L_0(u^{\sS}_t, f_t^\sS, \ell_t, \vx')
                \text{, where }
                \vx' = (\vx_t, h^{\sS'}_t(\vx_t))
                \text.
            \)
            (For $h_t$ see \eqref{eq:hpf-recursion} setting $f^{\sS} =f_t^{\sS}$ and $g^{\sS} = g^{\sS}_t$.)
            \item
            $\vx_t\notin\sS$. Retain states and forecasters, $u_{t+1}=u_t$, $f_{t+1}^\sS = f_t^\sS$, similarly for $v^{\sS}_t$ and $g^{\sS}_t$.
        \end{enumerate}
    \end{enumerate}
\end{minipage}}
\caption{Learning a sequence of HPFs.}
\label{fig:hpf}
\end{alg}

\subsection{Analysis}

\paragraph{Overview.}
We now proceed with the regret analysis of Algorithm~\ref{fig:hpf}, taking CPFs as competitors.
For this we first impose some technical constraints.
Next, we investigate the virtue of local online learning on the forecasting functions $f^\sS$.
Finally, based on, this we show that the regret can be split into two components:
regret by learning the structure of a CPF and regret by learning the forecasting functions of a CPF.

\paragraph{Technical Constraints.}
In the remaining part of this section we assume:

\begin{asm}
\label{asm:hpf}
Fix regret bounds $R_0$ and $R_1$ and let $\sG$ be the set of all forecasters with regret bound $R_1$ under $L_1$.
We assume:
\begin{enumerate}[label=(\roman*), ref=\ref{asm:hpf}(\roman*)]
    \item\label{asm:hpf-sanity}
    The set $\sG$ is non-empty.
    \item\label{asm:hpf-embedding}
    For any $g\in\sG$ there exists $f$ with regret at most $R_0$ under $L_0$ \st\ $f(\vx, \vy) = g(\vx)$, for all $(\vx, \vy)\in\sX\times\sY$.
    For any fixed $g$ the set of all such $f$ has a minimizer.\footnote{%
    Technically this is not required, yet it greatly eases the upcoming proofs.
    This similarly holds for the next assumption.
    }
    \item\label{asm:hpf-projection}
    There exists $f$ with regret at most $R_0$ under $L_0$ \st\ $f(\vx, \vy) = \vy$, for all $(\vx, \vy)\in\sX\times\sY$.
    The set of all such $f$ has a minimizer.
\end{enumerate}
\end{asm}

\noindent
Let us now briefly discuss the purpose of the above assumptions.
In general we should think of $R_0$ and $R_1$ as sufficiently good regret guarantees and $\sG$ as a set of desirable forecasters.
First, Assumption~\ref{asm:hpf-sanity} serves as a regularity condition to avoid pathological edge cases.
Assumption~\ref{asm:hpf-embedding} ensures that set $\sG$ is embedded in the set of forecasters that is learnable with regret bound $R_0$ under $L_0$.
Assumption~\ref{asm:hpf-projection} furthermore ensures that forecasters that just forward another prediction are learnable with regret bound $R_0$ under $L_0$.
In the next section we will explore the effects of these properties.

\paragraph{Virtue of Local Sequential Learning.}
We turn towards the loss accumulated at an arbitrary segment in the course of Algorithm~\ref{fig:hpf} which works out to
\(
    \loss^\sS_T := \sum_{t:\vx_t\in\sS}\ell_t(h^\sS_t(\vx_t))
    \text{,\enskip for $h^\sS_t$ see Algorithm~\ref{fig:hpf} and \eqref{eq:hpf-recursion}.}
    \label{eq:loss-hpf-partition}
\)

\begin{lem}[Local Sequential Learning]\label{lem:loss-hpf-partition}
Let Assumption~\ref{asm:hpf} hold and consider some segment $\sS\in\sH$ in the course of Algorithm~\ref{fig:hpf} and let $n := \lvert\{t:\vec x_t\in\sS\}\rvert$.
We have:
\begin{enumerate}[label=(\roman*),align=right,leftmargin=*,ref=\ref{lem:loss-hpf-partition}(\roman*)]
    \item\label{lem:loss-hpf-partition-a}
    If $\sS$ is indivisible, then for any $g\in\sG$ we have
    \(
        \loss^\sS_T
        \leq \sum_{\mathclap{t: \vx_t\in\sS}} \ell_t(g(\vx_t)) + p^\sS_T(g)
        \text{,\enskip for\enskip }
        p^\sS_T(g) := R_1(n, g^\sS_1, g)
        \text.
    \)
    \item\label{lem:loss-hpf-partition-b}
    If $\sS$ is divisible, then for any $g\in\sG$ we have
    \(
        \loss^\sS_T
        \leq \sum_{\mathclap{t: \vx_t\in\sS}} \ell_t(g(\vx_t)) + q^\sS_T(g)
        \text{,\enskip for\enskip }
        q^\sS_T(g) := \min_{f\in\dots} R_0(n, f^\sS_1, f)
        \text,
        \label{eq:loss-hpf-partition-b-regert}
    \)
    where the minimum is over all $f$ that satisfy Assumption~\ref{asm:hpf-embedding} for $g$.
    \item\label{lem:loss-hpf-partition-c}
    If $\sS$ is divisible, then 
    \(
        \loss^\sS_T
        \leq \sum_{\mathclap{\sS'\text{\,div.\,}\sS}} \loss^{\sS'}_T + r^\sS_T
        \text{,\enskip for\enskip }
        r^\sS_T := \min_{f\in\dots} R_0(n, f^\sS_1, f)
        \text,
        \label{eq:loss-hpf-partition-c-regert}
    \)
    where the minimum is over all $f$ that satisfy Assumption~\ref{asm:hpf-projection}.    
\end{enumerate}
\end{lem}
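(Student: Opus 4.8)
The plan is to handle all three parts through a single reduction and then specialise. First I would fix a segment $\sS\in\sH$ and read off from Algorithm~\ref{fig:hpf} how the forecaster attached to $\sS$ evolves: on every round with $\vx_t\notin\sS$ (Case~3c) its state and forecaster are retained unchanged, whereas on every round with $\vx_t\in\sS$ exactly one update is applied --- by $L_1$ when $\sS$ is indivisible (Case~3a) and by $L_0$ when $\sS$ is divisible (Case~3b). Consequently, deleting the inactive rounds and relabelling the $n$ active rounds $1,\dots,n$ turns the sequence of forecasters at $\sS$ into an honest trajectory of the relevant learner. By the recursion \eqref{eq:hpf-recursion}, the summand of $\loss^\sS_T$ on an active round is exactly that learner's per-round cost $c_t$ from \eqref{eq:learner-cost}. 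So each part reduces to applying the regret bound \eqref{eq:regret-bound} for the right learner against a well-chosen competitor and then recognising the competitor's cumulative cost as the claimed right-hand side.

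For part~\ref{lem:loss-hpf-partition-a} the learner is $L_1$ and $h^\sS_t(\vx_t)=g^\sS_t(\vx_t)$. Taking any $g\in\sG$ as competitor, $g$ has regret at most $R_1(n,g^\sS_1,g)$ under $L_1$ by the very definition of $\sG$; plugging this into \eqref{eq:regret-bound} over the length-$n$ subsequence and moving $\sum_t\ell_t(g(\vx_t))$ to the other side gives the bound with $p^\sS_T(g)=R_1(n,g^\sS_1,g)$.

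For parts~\ref{lem:loss-hpf-partition-b} and~\ref{lem:loss-hpf-partition-c} the segment is divisible, the learner is $L_0$, and its feature in round $t$ is the augmented $\vx'_t=(\vx_t,h^{\sS'}_t(\vx_t))\in\sX\times\sY$, with $h^\sS_t(\vx_t)=f^\sS_t(\vx'_t)$. For~\ref{lem:loss-hpf-partition-b} I would use Assumption~\ref{asm:hpf-embedding}: for the given $g$ select any admissible $f$ with $f(\vx,\vy)=g(\vx)$, so that its per-round cost collapses to $\ell_t(f(\vx'_t))=\ell_t(g(\vx_t))$; then \eqref{eq:regret-bound} yields $\loss^\sS_T\leq\sum_t\ell_t(g(\vx_t))+R_0(n,f^\sS_1,f)$, and minimising over all such $f$ (the minimiser exists by assumption) gives $q^\sS_T(g)$. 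For~\ref{lem:loss-hpf-partition-c} I would instead use Assumption~\ref{asm:hpf-projection} and pick an admissible $f$ with $f(\vx,\vy)=\vy$, so its per-round cost becomes $\ell_t(f(\vx'_t))=\ell_t(h^{\sS'}_t(\vx_t))$; because the segments dividing $\sS$ partition $\sS$, summing these costs over the subsequence reassembles $\sum_{\sS'\text{\,div.\,}\sS}\loss^{\sS'}_T$, and \eqref{eq:regret-bound} followed by minimisation over $f$ delivers $r^\sS_T=\min_f R_0(n,f^\sS_1,f)$.

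The step I expect to need the most care is the reduction in the first paragraph. The learner guarantee \eqref{eq:regret-bound} is stated for contiguous rounds $1,\dots,n$, while the active rounds for $\sS$ are in general interleaved with inactive ones; I would make explicit that Case~3c freezes both the state and the forecaster, so that excising the inactive rounds leaves a genuine $L_0$- or $L_1$-trajectory to which \eqref{eq:regret-bound} applies verbatim. The only other thing to track carefully is the two feature conventions --- $g$ and $g^\sS_t$ act on $\vx_t\in\sX$ while $f$ and $f^\sS_t$ act on the augmented $\vx'_t\in\sX\times\sY$ --- which is precisely the mismatch that Definition~\ref{dfn:hpf} and Assumptions~\ref{asm:hpf-embedding} and~\ref{asm:hpf-projection} are built to bridge.
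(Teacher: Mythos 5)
Your proposal is correct and follows essentially the same route as the paper's proof: reduce to the subsequence of rounds with $\vx_t\in\sS$, apply the regret bound of $L_1$ (part~(i)) or $L_0$ (parts~(ii) and~(iii)) against a competitor chosen via Assumption~\ref{asm:hpf-embedding} or~\ref{asm:hpf-projection}, and for part~(iii) use the HPF recursion to reassemble the competitor's cost as $\sum_{\sS'\,\text{div.}\,\sS}\loss^{\sS'}_T$. If anything, you are more explicit than the paper about why excising the frozen (Case~3c) rounds leaves a genuine learner trajectory, a point the paper handles implicitly through its relabelled indices $c_i, h_i$.
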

\begin{proof}
For brevity let $h_i = h^\sS_t$ and $c_i(f) := \ell_t(f(\vx_t))$, where $t$ is the $i$-th time step \st\ $\vx_t\in\sS$.
Based on this we obtain
\(
    \loss^\sS_T = \sum_{1\leq i\leq n} c_i(h_i)
    \label{eq:loss-gln-partition-proof-0}
\)
and distinguish:

\medskip\par\noindent\textit{Case~1: \sS\ is indivisible.} ---
Sequential learner $L_1$ generates sequence $h_1, h_2, \dots$ of forecasters (see Algorithm~\ref{fig:hpf}), where $h_1 = g^\sS_1$.
For any $g\in\sG$ we have
\(
    \sum_{1\leq i\leq n} c_i(h_i)
    &\stackrel{\text{\ref{it:loss-hpf-partition-proof-0}}}\leq
    \sum_{1\leq i\leq n} c_i(g) + R_1(n, h_1, g)
    \\
    &\stackrel{\text{\ref{it:loss-hpf-partition-proof-1}}}=
    \sum_{t:\vx_t\in\sS} \ell_t(g(\vx_t)) + R_1(n, g_1^\sS, g)
    \text,
\)
where we used
\begin{enumerate*}[label=(\alph*)]
    \item\label{it:loss-hpf-partition-proof-0}
    $g$ has regret bound $R_1$ under $L_1$ and
    \item\label{it:loss-hpf-partition-proof-1}
    the definition of the $c_i$'s and $h_1 = f^\sS_1$.
\end{enumerate*}
This proves Lemma~\ref{lem:loss-hpf-partition-a}.

\medskip\par\noindent\textit{Case~2: \sS\ is divisible.} ---
Sequential learner $L_0$ generates sequence $h_1, h_2, \dots$, of forecasters (see Algorithm~\ref{fig:hpf}), where $h_1 = f^\sS_1$.
Similarly to Case~1, for any $f$ with regret bound $R_0$ under $L_0$ we have
\(
    \sum_{1\leq i\leq n} c_i(h_i)
    &\leq
    \sum_{1\leq i\leq n} c_i(f) + R_0(n, f^\sS_1, f)
    \text.
    \label{eq:loss-hpf-partition-proof-2}
\)

\medskip\par\noindent\textit{Lemma~\ref{lem:loss-hpf-partition-b}:}
By Assumption~\ref{asm:hpf-embedding} we can choose $f$ that satisfies $f(\vx, \vy) = g(\vx)$ and at the same time minimizes $R_0(n, f^{\sS}_1, \cdot)$ for the desired $g$, so
\(
    \sum_{1\leq i\leq n} c(f) = \sum_{t:\vx\in\sS} \ell_t(g(\vx_t))
    \text.
\)
Combining this with \eqref{eq:loss-hpf-partition-proof-2} yields Lemma~\ref{lem:loss-hpf-partition-b}.

\medskip\par\noindent\textit{Lemma~\ref{lem:loss-hpf-partition-c}:}
We get
\(
    \sum_{1\leq i\leq n} c_i(f)
    &\stackrel{\text{\ref{it:loss-hpf-partition-proof-3}}}=
    \sum_{\text{$\sS'$\,div.\,\sS}}\sum_{t:\vx_t\in\sS'} \ell_t(f(\vx_t, h^{\sS'}_t(\vx_t)))
    \\
    &\stackrel{\text{\ref{it:loss-hpf-partition-proof-4}}}=
    \sum_{\text{$\sS'$\,div.\,\sS}}\sum_{t:\vx_t\in\sS'} \ell_t(g^{\sS'}_t(\vx_t))
    \\
    &\stackrel{\text{\ref{it:loss-hpf-partition-proof-5}}}=
    \sum_{\text{$\sS'$\,div.\,\sS}} \loss_T^{\sS'}
    \text,
    \label{eq:loss-hpf-partition-proof-3}
\)
where we used
\begin{enumerate*}[label=(\alph*)]
    \item\label{it:loss-hpf-partition-proof-3}
    the definition of the $c_i$'s and \eqref{eq:hpf-recursion},
    \item\label{it:loss-hpf-partition-proof-4}
    by Assumption~\ref{asm:hpf-projection} we can choose $f$ \st\ $f(\vx, \vy) = \vy$ and that it minimizes the $R_1$-term in \eqref{eq:loss-hpf-partition-proof-2} at the same time and
    \item\label{it:loss-hpf-partition-proof-5}
    Equation~\eqref{eq:loss-hpf-partition}.
\end{enumerate*}
We plug \eqref{eq:loss-hpf-partition-proof-3} into \eqref{eq:loss-hpf-partition-proof-2} and by our choice of $f$ we conclude the proof.
\end{proof}

\paragraph{Structure and Total Loss.}
From Algorithm~\ref{fig:hpf} we see that the total loss works out to
\(
    \loss_T^{\mathrm{HPF}}
    = \sum_{1\leq t\leq T} \ell_t(h^\sX_t(\vx_t))
    \text.
\)
By applying Lemma~\ref{lem:loss-hpf-partition-c} recursively we can associate the loss $\loss_T^{\mathrm{HPF}}$ to the local loss of a set of arbitrary segments from \sH\ that form a partition of \sX.
This guarantees that HPF can compete with CPFs with any partition induced by \sH.

\begin{lem}[Structure Loss]\label{lem:hpf-structure-regret}
Let Assumption~\ref{asm:hpf} hold and consider Algorithm~\ref{fig:hpf}.
For any partition $\sP$ induced by hierarchical partition $\sH$ of $\sX$ we have
\(
    \loss^{\mathrm{HPF}}_T
    \leq
    \sum_{\sS\in\sP} \loss^\sS_T
    + \sum_{\mathclap{\substack{\sS\in\sH: \\\sS\supset\sS'\in\sP}}}r^\sS_T
    \text.
    \label{eq:gln-structure-loss-0}
\)
For the term $r^\sS_T$ see Lemma~\ref{lem:loss-hpf-partition}.
\end{lem}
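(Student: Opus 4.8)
The plan is to reduce everything to repeated application of Lemma~\ref{lem:loss-hpf-partition-c}. First I would note that $\loss^{\mathrm{HPF}}_T = \loss^\sX_T$, since every feature vector lies in $\sX$: the root segment carries the entire loss, so it suffices to bound $\loss^\sX_T$.

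Unwinding Lemma~\ref{lem:loss-hpf-partition-c} naively from the root does not immediately induct, so instead I would prove a localized, strengthened statement by structural induction on $\sH$: for every segment $\sS\in\sH$ and every partition $\sP_\sS$ of $\sS$ induced by the hierarchical partition that $\sH$ restricts to below $\sS$,
\[
    \loss^\sS_T
    \leq
    \sum_{\sT\in\sP_\sS} \loss^\sT_T
    + \sum_{\substack{\sT\in\sH:\,\sT\subseteq\sS,\\ \sT\supset\sT'\in\sP_\sS}} r^\sT_T.
\]
Specializing to $\sS=\sX$ and $\sP_\sS=\sP$ then recovers the lemma.

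For the base case I would take $\sP_\sS=\{\sS\}$; this is the only induced partition when $\sS$ is indivisible, and it is also the point at which any branch of the recursion stops. Here both sums collapse to the single term $\loss^\sS_T$ and the inequality holds with equality. For the inductive step I assume $\sS\notin\sP_\sS$, so $\sP_\sS$ strictly refines $\sS$, which forces $\sS$ to be divisible; hence Lemma~\ref{lem:loss-hpf-partition-c} applies and bounds $\loss^\sS_T$ by $\sum_{\sS'\text{ div. }\sS}\loss^{\sS'}_T + r^\sS_T$. The structural fact I would invoke is that in a hierarchical partition any two segments are nested or disjoint; hence every $\sT'\in\sP_\sS$ with $\sT'\subset\sS$ sits inside exactly one child $\sS'$ dividing $\sS$, so $\sP_{\sS'}:=\{\sT'\in\sP_\sS : \sT'\subseteq\sS'\}$ is a partition of $\sS'$ induced by $\sH$. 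I then apply the induction hypothesis to each $\loss^{\sS'}_T$ and sum over children.

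The main obstacle — and the step I would write most carefully — is the bookkeeping of the $r$-terms. I would verify that the index set of the claim restricted to $\sS$ splits as the disjoint union $\{\sS\}\cup\bigcup_{\sS'\text{ div. }\sS}\{\sT\in\sH : \sT\subseteq\sS',\ \sT\supset\sT'\in\sP_{\sS'}\}$: the singleton $\{\sS\}$ is contributed by the top-level $r^\sS_T$ (legitimate because $\sS\notin\sP_\sS$ makes $\sS$ a strict ancestor of every segment of $\sP_\sS$), while each child subtree contributes exactly its own strict-ancestor terms via the induction hypothesis, and distinct children yield disjoint segment collections. Matching this against the claimed index set, and observing that the loss terms accumulate to $\sum_{\sT\in\sP_\sS}\loss^\sT_T$, closes the induction. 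Everything beyond this matching is routine substitution into the bound of Lemma~\ref{lem:loss-hpf-partition-c}.
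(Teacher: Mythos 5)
Your proof is correct, and it reaches the lemma by a genuinely different inductive organization than the paper, even though both arguments use Lemma~\ref{lem:loss-hpf-partition-c} as the sole engine. The paper inducts on the cardinality $\sz\sP$ and works by \emph{coarsening}: it locates a complete sibling family $\sS_1,\dots,\sS_n\in\sP$ whose union is their common parent $\sS'\in\sH$, applies the induction hypothesis to the strictly smaller induced partition $\sP'=\sP\setminus\{\sS_1,\dots,\sS_n\}\cup\{\sS'\}$, and then expands $\loss^{\sS'}_T$ once via Lemma~\ref{lem:loss-hpf-partition-c}. You instead strengthen the statement to a localized claim (for every segment $\sS\in\sH$ and every partition of $\sS$ induced by the subtree below $\sS$) and run a top-down structural induction: apply Lemma~\ref{lem:loss-hpf-partition-c} once at $\sS$, then recurse into the children. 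The trade-offs are instructive. The paper's step quietly relies on a nontrivial existence claim --- that any induced partition with $\sz\sP>1$ contains a \emph{complete} sibling family --- which needs a ``deepest segment'' argument that the paper does not spell out; your route replaces that claim by the more elementary nested-or-disjoint property of hierarchical partitions, which you invoke explicitly to show each $\sP_{\sS'}$ is a partition of the child $\sS'$, and your disjoint-union bookkeeping of the $r$-terms over subtrees is correspondingly cleaner. The price you pay is having to formulate and carry the stronger inductive hypothesis; the paper gets away with inducting on the lemma's statement as given. Both derivations recover the claimed bound upon specializing your statement to $\sS=\sX$ and $\sP_\sS=\sP$.
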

\begin{proof}
Our prove is by induction on $\sz\sP$.

\medskip\par\noindent\textit{Base: $\sz\sP=1$.} ---
We get
\(
    \loss^{\mathrm{HPF}}_T
    \stackrel{\text{\ref{it:hpf-structure-loss-proof-0}}}=
    \loss^\sX_T
    \stackrel{\text{\ref{it:hpf-structure-loss-proof-1}}}=
    \sum_{\sS\in\sP} \loss^\sS_T + \sum_{\mathclap{\substack{\sS\in\sH: \\\sS\supset\sS'\in\sP\\}}}r^\sS_T
    \text,
\)
where we used
\begin{enumerate*}[label=(\alph*)]
    \item\label{it:hpf-structure-loss-proof-0}
    Definition~\ref{dfn:hpf} and \eqref{eq:loss-hpf-partition} and
    \item\label{it:hpf-structure-loss-proof-1} 
    the sum over $r^\sS_T$ is empty, since
    $\sP=\{\sX\}$ (the sole partition of $\sX$ of size one) which implies $\{\sS\in\sH :\allowbreak \sS\supset\sS'\allowbreak\text{ for }\sS'\in\sP \} = \emptyset$.
\end{enumerate*}

\medskip\par\noindent\textit{Step: $\sz\sP>1.$} ---
There exist pairwise disjoint segments $\sS_1, \sS_2, \dots, \sS_n\in\sP$, where $n\geq2$, and $\sS'\in\sH$ such that $\sS_1, \sS_2, \dots, \sS_n$ divide $\sS'$ and $\sS_1 \cup \sS_2\cup\dots\cup\sS_n = \sS'$.
Hence, the set $\sP' := \sP \setminus\{\sS_1, \sS_2, \dots, \sS_n\}\cup\sS'$ also is a partition induced by $\sH$ and has strictly lower cardinality than $\sP$.
We conclude
\(
    \loss^{\mathrm{HPF}}_T
    &\stackrel{\text{\ref{it:hpf-structure-loss-proof-2}}}\leq
    \sum_{\sS\in\sP'\setminus\{\sS'\}} \loss^\sS_T
    + \loss^{\sS'}_T
    + \sum_{\mathclap{\substack{\sS\in\sH: \\\sS\supset\sS'\in\sP'}}}r^\sS_T
    \\
    &\stackrel{\text{\ref{it:hpf-structure-loss-proof-3}}}\leq
    \sum_{\sS\in\sP'\setminus\{\sS'\}} \loss^\sS_T
    + \sum_{\mathclap{1\leq i\leq n}} \loss^{\sS_i}_T
    + r^{\sS'}_T
    + \sum_{\mathclap{\substack{\sS\in\sH: \\\sS\supset\sS'\in\sP'}}}r^\sS_T
    \\
    &\stackrel{\text{\ref{it:hpf-structure-loss-proof-4}}}=
    \sum_{\substack{\sS\in\sP\\\phantom{\sS\in\sP':}}} \loss^\sS_T + \sum_{\mathclap{\substack{\sS\in\sH: \\\sS\supset\sS'\in\sP\\}}}r^\sS_T
    \text,
\)
where we used
\begin{enumerate*}[label=(\alph*)]
    \item\label{it:hpf-structure-loss-proof-2}
    the induction hypothesis for $\sP'$,
    \item\label{it:hpf-structure-loss-proof-3} 
    Lemma~\ref{lem:loss-hpf-partition-c} for $\loss^{\sS'}_T$ and
    \item\label{it:hpf-structure-loss-proof-4} 
    $\sP = \sP'\setminus\{\sS'\}\cup\{\sS_1, \dots, \sS_2\}$, $\{\sS\in\sH:\allowbreak\sS\subset\sS'\text{ for }\sS'\in\sP\} \allowbreak = \{\sS'\}\cup\allowbreak\{\sS\in\sH:\sS\subset\sS'\text{ for }\sS'\in\sP'\}$ and $\sS_1, \dots, \sS_n$ partition $\sS'$, \ie\ there is no multiple summation (all by construction).
    \qedhere
\end{enumerate*}
\end{proof}

\noindent
To obtain our first main result it remains to argue that the sequential learners can learn a desirable forecaster at every segment of a HPF.
(Lemma~\ref{lem:loss-hpf-partition-a} for indivisible segments and Lemma~\ref{lem:loss-hpf-partition-b} for divisible segments).

\begin{thm}[Total Loss]\label{thm:hpf-total-loss}
Let Assumption~\ref{asm:hpf} hold and consider Algorithm~\ref{fig:hpf}.
For any CPF with partition $\sP$ induced by hierarchical partition $\sH$ and forecasters $\{f^\sS\}_{\sS\in\sP}\subseteq\sG$ we have
\(
    \loss^{\mathrm{HPF}}_T
    \leq \loss^{\mathrm{CPF}}_T
    + \sum_{\mathclap{\substack{\sS\in\sP: \\\text{$\sS$ indiv.}}}}p^\sS_T(f^\sS)
    + \sum_{\mathclap{\substack{\sS\in\sP: \\\text{$\sS$ div.}}}}q^\sS_T(f^\sS)
    + \sum_{\mathclap{\substack{\sS\in\sH: \\\sS\supset\sS'\in\sP\\}}}r^\sS_T
    \text.
    \label{eq:hpf-total-loss-0}
\)
For the terms $p^\sS_T$, $q^\sS_T$ and $r^\sS_T$ see Lemma~\ref{lem:loss-hpf-partition}.
\end{thm}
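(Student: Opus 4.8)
The plan is to combine the two results already established in this section, so the argument is essentially an assembly step. First I would start from the Structure Loss lemma (Lemma~\ref{lem:hpf-structure-regret}), which for the given induced partition $\sP$ already yields
\[
  \loss^{\mathrm{HPF}}_T \leq \sum_{\sS\in\sP} \loss^\sS_T + \sum_{\substack{\sS\in\sH:\,\sS\supset\sS'\in\sP}} r^\sS_T .
\]
This immediately accounts for the last summand of the claim, so the only remaining task is to bound each local loss $\loss^\sS_T$ over the segments $\sS\in\sP$ against the loss of the competing forecaster $f^\sS$.

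Next, for each $\sS\in\sP$ I would apply Local Sequential Learning (Lemma~\ref{lem:loss-hpf-partition}) with the specific choice $g = f^\sS$. This choice is legitimate exactly because the hypothesis guarantees $\{f^\sS\}_{\sS\in\sP}\subseteq\sG$, so every $f^\sS$ lies in the admissible set for the lemma. I would then split according to divisibility: if $\sS$ is indivisible, part~\ref{lem:loss-hpf-partition-a} gives $\loss^\sS_T \leq \sum_{t:\vx_t\in\sS}\ell_t(f^\sS(\vx_t)) + p^\sS_T(f^\sS)$; if $\sS$ is divisible, part~\ref{lem:loss-hpf-partition-b} gives the same inequality with $q^\sS_T(f^\sS)$ replacing $p^\sS_T(f^\sS)$. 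In either case the leading term is the competitor's accumulated loss on $\sS$.

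Finally I would sum these per-segment bounds over all $\sS\in\sP$ and substitute back into the Structure Loss inequality. The summed competitor losses $\sum_{\sS\in\sP}\sum_{t:\vx_t\in\sS}\ell_t(f^\sS(\vx_t))$ coincide exactly with $\loss^{\mathrm{CPF}}_T$ by the CPF loss identity~\eqref{eq:cpf-loss}, while the overhead terms separate cleanly into the sum of $p^\sS_T(f^\sS)$ over indivisible segments and the sum of $q^\sS_T(f^\sS)$ over divisible segments of $\sP$, matching the two partition sums in the statement. The whole argument is bookkeeping rather than calculation; the only point that deserves care — and the closest thing to an obstacle — is verifying that the membership hypothesis $\{f^\sS\}\subseteq\sG$ genuinely licenses the application of the local lemma at \emph{every} segment, and that the indivisible/divisible split of $\sP$ lines up precisely with the two competitor-regret sums so that no term is double counted or omitted.
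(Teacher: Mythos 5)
Your proposal is correct and follows the paper's own proof exactly: apply the Structure Loss lemma (Lemma~\ref{lem:hpf-structure-regret}) to reduce to per-segment losses, bound each $\loss^\sS_T$ via Lemma~\ref{lem:loss-hpf-partition-a} or \ref{lem:loss-hpf-partition-b} with $g = f^\sS$ (legitimate since $\{f^\sS\}_{\sS\in\sP}\subseteq\sG$), and identify the summed competitor losses with $\loss^{\mathrm{CPF}}_T$ via \eqref{eq:cpf-loss}. Nothing is missing.
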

\begin{proof}
We get
\(
    \loss^{\mathrm{HPF}}_T
    &\stackrel{\text{\ref{it:hpf-total-loss-proof-0}}}\leq
    \sum_{{\substack{\sS\in\sP:\\\text{\sS\ indiv.}}}}\loss^\sS_T
    + \sum_{{\substack{\sS\in\sP:\\\text{\sS\ div.}}}}\loss^\sS_T
    + \sum_{\mathclap{\substack{\sS\in\sH: \\\sS\supset\sS'\in\sP\\}}}r^\sS_T
    \\
    &\stackrel{\text{\ref{it:hpf-total-loss-proof-1}}}\leq
    \sum_{{\substack{\sS\in\sP:\\\text{\sS\ indiv.}}}}\left[\sum_{t:\vx_t\in\sS}\ell_t(f^\sS(\vx_t))+ p^\sS_T(f^\sS)\right]
    \\
    &\phantom\leq\qquad+
    \sum_{{\substack{\sS\in\sP:\\\text{\sS\ div.}}}}\left[\sum_{t:\vx_t\in\sS}\ell_t(f^\sS(\vx_t)) + q^\sS_T(f^\sS)\right]
    + \sum_{\mathclap{\substack{\sS\in\sH: \\\sS\supset\sS'\in\sP\\}}}r^\sS_T
    \\
    &=
    \sum_{\sS\in\sP}\sum_{t:\vx_t\in\sS}\ell_t(f^\sS(\vx_t))
    + \sum_{\mathclap{\substack{\sS\in\sP: \\\text{$\sS$ indiv.}}}}p^\sS_T(f^\sS)
    + \sum_{\mathclap{\substack{\sS\in\sP: \\\text{$\sS$ div.}}}}q^\sS_T(f^\sS)
    + \sum_{\mathclap{\substack{\sS\in\sH: \\\sS\supset\sS'\in\sP\\}}}r^\sS_T
\)
and finally \eqref{eq:hpf-total-loss-0} follows from Definition~\ref{dfn:cpf} and \eqref{eq:cpf-loss}.
In the above we
\begin{enumerate*}[label=(\alph*)]
    \item\label{it:hpf-total-loss-proof-0}
    applied Lemma~\ref{lem:hpf-structure-regret} and split \sP\ into indivisible and divisible segments and
    \item\label{it:hpf-total-loss-proof-1}
    applied Lemma~\ref{lem:loss-hpf-partition-a} to indivisible segments and Lemma~\ref{lem:loss-hpf-partition-b} to divisible segments.
\end{enumerate*}
\end{proof}


\section{Learning Linear HPFs (LHPFs)}
\label{sec:lhpf}

\paragraph{Overview.}
In this section we consider an important special case of the sequential prediction problem, that is forecasting a scalar (forecast space $\sY\subseteq\RR)$ given $n$-dimensional feature vectors (feature space $\sX\subseteq\RR^n$).
To tackle this problem we propose populating HPF with linear functions over some parameter space $\sW\subseteq\RR^n$ as forecasters for individual segments.
In the following we fill in the gaps in Algorithm~\ref{fig:hpf}:
First, we introduce learners for divisible and indivisible segments that imply $O(\log T)$ regret \wrt\ a CPF with linear functions with parameters from \sW\ as forecasters.
Second, we explain how to choose the hierarchical partition \sH.

\paragraph{Exp-Concavity.}
The upcoming analysis and loss bounds are based on sufficient curvature of the underlying loss functions $\ell_1, \ell_2, \dots$.
The curvature we demand is slightly stronger than just convexity, that is:

\begin{dfn}
For $\eta>0$ some function $f:\sX\rightarrow\RR$, where $\sX\subseteq\RR^n$, is \emph{$\eta$-exp concave}, if $e^{-\eta f}$ is concave.
\end{dfn}

\noindent
Observe that for an $\eta$-exp concave function $f:\RR\rightarrow\RR$ with scalar domain the function $g:\RR^n\rightarrow\RR$, where $g(\vec x) := f(\vec a^\T\vx)$, for some fixed $\vec a\in\RR^n$, with vector domain trivially also is $\eta$-exp concave.

\subsection{Sequential Learner for Indivisible Segments}
\label{sec:learner-for-indivisible-segments}

\begin{alg}
\centering
\small
\fboxsep=0.4\baselineskip
\fbox{\begin{minipage}{0.9\textwidth}
    \noindent
    \begin{tabular}{@{}l@{}@{\hskip2pt}c@{\hskip2pt}p{0.8\columnwidth}@{}}
        \textbf{Input}&\textbf:&
        Features $\vx_{1:T}$ from $\RR^n$, loss functions $\ell_{1:T}$, parameter $\gamma>0$, parameter space $\mathcal W\subseteq\RR^n$.
        \\[0.2\baselineskip]
        \textbf{Output}&\textbf:& Predictions $y_{1:T}$.
        \\[0.2\baselineskip]
    \end{tabular}
    \\[.5\baselineskip]
    Set initial state $\vA_0 = \vec0, \vb_0=\vec0$ and $\vw_0=\frac1m\cdot\vec1$.\\[0.5\baselineskip]
    For $t = 1, 2, \dots, T$ do:
    \begin{enumerate}[label=\arabic*.,ref=\arabic*]
        \item
        Observe features $\vx_t$ and output prediction $y_t = \vw_{t-1}^\T \vx_t$.
        \item
        Observe $\ell_t$, let $\nabla_t = \nabla_{\vec w}\mkern2mu\ell_t(\vec w^\T\vx_t)\big\rvert_{\vw=\vw_{t-1}}$ and update state
        \(
            \vA_t
            &= \vA_{t-1} + \nabla_t\nabla_t^\T
            \text,\\
            \vb_t
            &= \vb_{t-1} + \left(\nabla_t^\T\vw_{t-1}-\frac1\gamma\right)\cdot \nabla_t
            \text{ and }
            \\
            \vw_t
            &= \arg\min_{\vw\in\mathcal W} \frac12\vw^\T\vA_{t-1}\vw - \vb_{t-1}^\T\vw
            \text.
        \)
    \end{enumerate}
\end{minipage}}
\caption{Learner for linear forecasters (indivisible segments).}
\label{fig:indivisible-segment-learner}
\end{alg}

We now consider efficient sequential learning for the family of linear forecasters with constrained parameters, that is
\(
    \sG = \{f:\RR^n \rightarrow \RR \mid f(\vx) = \vw^\T\vx\text{ and }\vw\in\sW\}
    \text{, for some }
    \sW\subseteq\RR^n
    \text.
    \label{eq:indivisible-segment-forecaster}
\)
Based on previous work on second-order sequential learning Algorithm~\ref{fig:indivisible-segment-learner} specifies a desirable learner, as any $g\in\sG$ has at most logarithmic regret under this learner, given reasonable regularity constraints, as we will see shortly.

Let us take a closer look at Algorithm~\ref{fig:indivisible-segment-learner}.
The algorithm maintains a state $(\vA_t, \vb_t, \vw_t)$ that represents a second-order approximation of the total loss $\sum_t \ell_t(\vx_t\cdot\vw)$ as a function of forecaster parameters \vw\ and the current minimizer $\vw_t$ of that approximate loss.
In round $t$ it chooses the forecaster $\vx\mapsto \vec w_{t-1}\vx$ from \sG\ that minimizes the approximate total loss $\frac12\vw^\T\vA_{t-1}\vw-\vb_{t-1}^\T\vw$ up until round $t-1$.
If the parameter space \sW\ is a compact convex set then minimizing approximate total loss is a convex (\ie\ well-behaved and efficiently solvable) optimization problem, since $\vA_{t-1}$ by construction is positive semi-definite.
This procedure comes with the following regret guarantee.

\begin{lem}[FTAL, see Theorem~6 in \cite{hazan2006ons}]
\label{lem:indivisible-segment-regret}
Consider Algorithm~\ref{fig:indivisible-segment-learner}.
If
\begin{itemize}
    \item
    $\sW\subseteq\RR^n$ is bounded \st\ $\norm{\vec v-\vec u} \leq D$, for all $\vec u, \vec v\in\sW$,
    \item
    $\ell_t$ is $\eta$-exp-concave for all $1\leq t\leq T$,
    \item
    $\norm{\nabla \ell_t(\vx_t^\T\vw)}\leq G$, for all $\vw\in\sW$ and all $1\leq t\leq T$ and
    \item
    we choose $\gamma=\frac12\min\left\{\frac1{4GD}, \eta\right\}$
\end{itemize}    
then any $g\in\sG$ has regret at most
\(
    R(T, g_1, g) = A\cdot\left(1 + \log T\right)
    \text{, where }
    A:=64n\left(\frac1\eta+GD\right)
\)
and $g_1(\vx)=\frac1m\vec1^\T\vx$, under Algorithm~\ref{fig:indivisible-segment-learner}.
\end{lem}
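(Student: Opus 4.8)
The plan is to recognize Algorithm~\ref{fig:indivisible-segment-learner} as an instance of Follow-The-Approximate-Leader (FTAL) and to reproduce the analysis behind Theorem~6 in \cite{hazan2006ons}. Write $\vw_0,\vw_1,\dots$ for the iterates and $\nabla_t$ as in the algorithm, and fix an arbitrary comparator $g\in\sG$, say $g(\vx)=\vu^\T\vx$ with $\vu\in\sW$. Since in round $t$ the learner outputs $\vw_{t-1}^\T\vx_t$, the quantity to bound is the regret $\sum_{1\leq t\leq T}\big(\ell_t(\vw_{t-1}^\T\vx_t)-\ell_t(\vu^\T\vx_t)\big)$ against this fixed $\vu$.

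First I would extract the quadratic lower bound that exp-concavity provides. Because each $\ell_t$, seen as a function of $\vw$, is $\eta$-exp-concave, has gradient norm at most $G$, and the feasible differences satisfy $\norm{\vw-\vw_{t-1}}\leq D$, the scalar $\nabla_t^\T(\vw-\vw_{t-1})$ stays in a range of radius at most $GD$; feeding this into the definition of exp-concavity together with the elementary estimate $e^{-x}\leq 1-x+\tfrac12 x^2$ valid on bounded intervals yields, for all $\vw\in\sW$,
\(
    \ell_t(\vw^\T\vx_t)\geq \ell_t(\vw_{t-1}^\T\vx_t)+\nabla_t^\T(\vw-\vw_{t-1})+\tfrac\gamma2\big(\nabla_t^\T(\vw-\vw_{t-1})\big)^2
    \text.
\)
This is precisely where the prescribed $\gamma=\tfrac12\min\{1/(4GD),\eta\}$ is needed. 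Calling the right-hand side $\tilde\ell_t(\vw)$, it agrees with the true loss at $\vw=\vw_{t-1}$ and lower bounds it everywhere on $\sW$.

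Next I would check that the algorithm is exactly FTAL on these surrogates, \ie\ that $\vw_{t-1}=\arg\min_{\vw\in\sW}\sum_{s<t}\tilde\ell_s(\vw)$. Expanding each quadratic $\tilde\ell_s$, collecting terms and dropping additive constants independent of $\vw$ shows that $\tfrac1\gamma\sum_{s<t}\tilde\ell_s(\vw)$ equals $\tfrac12\vw^\T\vA_{t-1}\vw-\vb_{t-1}^\T\vw$ up to a constant, which is the objective minimized in the update; the stated recursions for $\vA_t$ and $\vb_t$ drop out of this bookkeeping. Since $\tilde\ell_t(\vw_{t-1})=\ell_t(\vw_{t-1}^\T\vx_t)$ while $\tilde\ell_t(\vu)\leq\ell_t(\vu^\T\vx_t)$, the true regret is at most the surrogate regret $\sum_t\big(\tilde\ell_t(\vw_{t-1})-\tilde\ell_t(\vu)\big)$, and a standard Be-the-Leader induction replaces $\vu$ by the one-step-ahead leader to give $\sum_t\big(\tilde\ell_t(\vw_{t-1})-\tilde\ell_t(\vu)\big)\leq\sum_t\big(\tilde\ell_t(\vw_{t-1})-\tilde\ell_t(\vw_t)\big)$.

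It then remains to bound this telescoped surrogate gap. Each cumulative objective $\sum_{s\leq t}\tilde\ell_s$ is a quadratic with Hessian $\gamma\vA_t$ and with gradient $\nabla_t$ at $\vw_{t-1}$, so the per-round difference is bounded by a constant multiple of $\tfrac1\gamma\nabla_t^\T\vA_t^{-1}\nabla_t$; summing, a matrix log-determinant inequality then controls $\sum_t\nabla_t^\T\vA_t^{-1}\nabla_t$ by $O(n\log T)$. This last step is where I expect the real difficulty to lie: $\vA_0=\vec0$ is singular, so the naive telescoping $\sum_t\nabla_t^\T\vA_t^{-1}\nabla_t\leq\log(\det\vA_T/\det\vA_0)$ must be regularized (or the opening rounds treated by hand), and the constants have to be carried through the factor $1/\gamma=2\max\{4GD,1/\eta\}$ so that everything collapses exactly to $A(1+\log T)$ with $A=64n(1/\eta+GD)$. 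Chaining the preceding inequalities and substituting the value of $\gamma$ then gives the claim; the initial forecaster $g_1(\vx)=\tfrac1m\vec1^\T\vx$ enters only as the fixed starting point $\vw_0\in\sW$ and is irrelevant to the asymptotics.
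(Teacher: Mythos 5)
The paper itself gives no proof of this lemma---it defers entirely to Theorem~6 of \cite{hazan2006ons}---and your proposal is a reconstruction of exactly that FTAL argument (quadratic lower bound from exp-concavity, follow/be-the-leader induction, log-determinant bound on the telescoped surrogate gap), so you are taking essentially the same route as the paper's cited source. One detail to fix if you flesh the sketch out: the elementary bound $e^{-x}\leq 1-x+\tfrac12x^2$ you invoke is false for $x<0$ (and $\nabla_t^\T(\vw-\vw_{t-1})$ can have either sign), so the quadratic lower bound should instead be derived as in Lemma~3 of \cite{hazan2006ons} from concavity of $e^{-2\gamma\ell_t}$ together with $\log(1-z)\leq -z-\tfrac14z^2$ for $\lvert z\rvert\leq\tfrac14$, which is precisely where the choice $\gamma=\tfrac12\min\{1/(4GD),\eta\}$ is needed.
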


\noindent
For the proof we defer the reader to the corresponding reference.

\subsection{Sequential Learner for Divisible Segments}

\begin{alg}
\centering
\small
\fboxsep=0.4\baselineskip
\fbox{\begin{minipage}{0.9\textwidth}
    \noindent
    \begin{tabular}{@{}l@{}@{\hskip2pt}c@{\hskip2pt}p{0.8\columnwidth}@{}}
        \textbf{Input}&\textbf:&
        Features $\vx_{1:T}$ from $\RR^n$,
        predictions $v_{1:T}$ from $\RR$,
        loss functions $\ell_{1:T}$,
        parameter $\gamma>0$,
        parameter space $\mathcal W\subseteq\RR^n$.
        \\[0.2\baselineskip]
        \textbf{Output}&\textbf:& Predictions $y_{1:T}$.
        \\[0.2\baselineskip]
    \end{tabular}
    \\[.5\baselineskip]
    Set initial state $\vA_0 = \vec0, \vb_0=\vec0, \vw_0=\frac1m\cdot\vec1$ and $\vec\beta_0=\left(\frac12~ \frac12\right)^\T$.\\[0.5\baselineskip]
    For $t = 1, 2, \dots, T$ do:
    \begin{enumerate}[label=\arabic*.,ref=\arabic*]
        \item
        Observe features $\vx_t$ and compute base prediction $u_t = \vw_{t-1}^\T\vx_t$.
        \item
        Observe expert prediction $v_t$ and mix with base prediction
        \(
            y_t
            = \frac{\vec\beta_{t-1}}{\vec1^\T\vec\beta_{t-1}}\cdot\left(\begin{matrix}u_t\\v_t\end{matrix}\right)
            \text.
            \label{eq:divisible-segment-learner-prediction}
        \)
        \item
        Observe $\ell_t$, let $\nabla_t = \nabla_{\vec w}\mkern2mu\ell_t(\vec w^\T\vx_t)\big\rvert_{\vw=\vw_{t-1}}$, and update state
        \(
            \vA_t
            &= \vA_{t-1} + \nabla_t\nabla_t^\T
            \text,\\
            \vb_t
            &= \vb_{t-1} + \left(\nabla_t^\T\vw_{t-1}-\frac1\gamma\right)\cdot \nabla_t
            \text,\\
            \vw_t
            &= \arg\min_{\vw\in\mathcal W} \frac12\vw^\T\vA_{t-1}\vw - \vb_{t-1}^\T\vw
            \text,\\
            \begin{split}
            \alpha_t
            &=\frac1{t+1}
            \text{ and }\\
            \vec\beta_t
            &= \left(\begin{matrix}(1-\alpha_t)e^{-\eta\ell_t(u_t)}&\alpha_t e^{-\eta\ell_t(v_t)}\\\alpha_t e^{-\eta\ell_t(v_t)}&(1-\alpha_t)e^{-\eta\ell_t(v_t)}\end{matrix}\right)\cdot\vec\beta_{t-1}
            \text.
            \end{split}
            \label{eq:divisible-segment-learner-beta-update}
        \)
    \end{enumerate}
\end{minipage}}
\caption{Learner for divisible segments.}
\label{fig:divisible-segment-learner}
\end{alg}

For divisible segments we structure the family of forecasters slightly different compared to \eqref{eq:indivisible-segment-forecaster}, that is
\(
    \sF
    := \{f:\RR^{n+1}\rightarrow\RR \mid f(\vx, y) &= (1-v)\cdot \vw^\T \vx + v \cdot y,
    \\
    v&\in[0, 1]\text{ and }\vw\in\sW\}
    \text.
    \label{eq:divisible-segment-forecaster}
\)
Obviously the class \sG\ of forecasters is embedded in this set.
In Algorithm~\ref{fig:divisible-segment-learner} we depict a learner that, on the one hand, satisfies the technical requirements that we require to state a regret guarantee on Linear HPF and, on the other hand, guarantees low regret.
That is, any $f\in\sF$ has at most logarithmic regret under this learner.

\begin{lem}
\label{lem:divisible-segment-regret}
Consider Algorithm~\ref{fig:divisible-segment-learner} and define $A := 64n\left(\frac1\eta + GD\right)$, $B := \frac1\eta$ and $f_1(\vx, y) = \frac1{2m}\vec1^\T\vx + \frac y2$.
Under the regularity conditions and the choice of $\gamma$ specified in Lemma~\ref{lem:indivisible-segment-regret} we have:
\begin{enumerate}[label=(\roman*)]
    \item
    \label{lem:divisible-segment-regret-a}
    For every $g\in\sG$ there exists $f\in\sF$ \st\ we have $f(\vx, y) = g(\vx)$ for all $\vx, y$.
    Any such $f$ has regret at most
    \(
        R(T, f_1, f) = (A + B)\cdot(1 +\log T)
    \)
    under Algorithm~\ref{fig:divisible-segment-learner}.
    \item
    \label{lem:divisible-segment-regret-b}
    There exists $f\in\sF$ \st\ we have $f(\vx, y) = y$ for all $\vx, y$.
    Any such $f$ has regret at most
    \(
        R(T, f_1, f) = B\cdot (1 + \log T)
    \)
    under Algorithm~\ref{fig:divisible-segment-learner}.
\end{enumerate}
\end{lem}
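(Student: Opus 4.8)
The plan is to view Algorithm~\ref{fig:divisible-segment-learner} as a two-layer construction: an inner \emph{base learner} --- exactly the FTAL update of Algorithm~\ref{fig:indivisible-segment-learner} on the state $(\vA_t,\vb_t,\vw_t)$, producing the base prediction $u_t=\vw_{t-1}^\T\vx_t$ --- wrapped in an outer \emph{switching} layer on the weight vector $\vec\beta_t$ that blends $u_t$ with the forwarded prediction $v_t$ via \eqref{eq:divisible-segment-learner-prediction}. I would treat the two coordinates of $\vec\beta_t$ as two experts: expert $A$ always plays the base prediction $u_t$, and expert $B$ always plays $v_t$. Under this reading both claims reduce to ``compete with a single fixed expert'', and the only difference between them is how that expert's loss is subsequently bounded.

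First I would establish a per-round mixability bound from exp-concavity. Since each $\ell_t$ is $\eta$-exp-concave and $y_t$ in \eqref{eq:divisible-segment-learner-prediction} is the $\vec\beta_{t-1}$-weighted convex combination of $u_t$ and $v_t$, the standard aggregating-algorithm inequality gives $\ell_t(y_t)\leq-\tfrac1\eta\log\bigl(\vec1^\T\tilde{\vec\beta}_t/\vec1^\T\vec\beta_{t-1}\bigr)$, where $\tilde{\vec\beta}_t$ is the loss-reweighted (pre-switch) weight with entries $\beta_{t-1,1}e^{-\eta\ell_t(u_t)}$ and $\beta_{t-1,2}e^{-\eta\ell_t(v_t)}$. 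Summing over $t$ and telescoping the potential $-\tfrac1\eta\log(\vec1^\T\vec\beta_t)$ bounds $\sum_t\ell_t(y_t)$ by $-\tfrac1\eta\log(\vec1^\T\vec\beta_T)$, since the initial potential vanishes as $\vec1^\T\vec\beta_0=1$, up to the mixing introduced by the $\alpha_t$-step of \eqref{eq:divisible-segment-learner-beta-update}.

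Next I would lower-bound $\vec1^\T\vec\beta_T$ by the weight accumulated along the constant path that never leaves a single state. Because all entries of the update matrix are nonnegative, $\vec1^\T\vec\beta_T$ dominates the diagonal product $\beta_{0,i}\prod_t(1-\alpha_t)e^{-\eta\ell_t(\cdot)}$ for either state $i$; crucially the asymmetric off-diagonal entries never enter this fixed-path estimate. With $\alpha_t=\tfrac1{t+1}$ the stay-probabilities telescope, $\prod_{t=1}^T(1-\alpha_t)=\prod_{t=1}^T\tfrac{t}{t+1}=\tfrac1{T+1}$, so locking onto state $A$ yields $\sum_t\ell_t(y_t)\leq\sum_t\ell_t(u_t)+B(1+\log T)$ and locking onto state $B$ yields $\sum_t\ell_t(y_t)\leq\sum_t\ell_t(v_t)+B(1+\log T)$, with $B=\tfrac1\eta$ absorbing the $\tfrac1\eta\log(T+1)$ penalty.

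The two claims then follow by identifying the competitor and, for the first claim, invoking the inner learner's guarantee. For $g\in\sG$ the choice $v=0,\vw=\vw_g$ realises $f(\vx,y)=g(\vx)$ in \eqref{eq:divisible-segment-forecaster}, so the competitor's loss in \eqref{eq:regret-bound} is $\sum_t\ell_t(g(\vx_t))$; the base prediction $u_t$ is produced by exactly the FTAL recursion of Algorithm~\ref{fig:indivisible-segment-learner}, so Lemma~\ref{lem:indivisible-segment-regret} gives $\sum_t\ell_t(u_t)\leq\sum_t\ell_t(g(\vx_t))+A(1+\log T)$, and chaining with the state-$A$ bound yields regret $(A+B)(1+\log T)$. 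For the second claim the choice $v=1$ realises $f(\vx,y)=y$, whose loss along the observed sequence is exactly $\sum_t\ell_t(v_t)$, so the state-$B$ bound alone gives regret $B(1+\log T)$ with no FTAL term. The main obstacle is the switching-layer telescoping: one must verify the exp-concavity step each round, account correctly for the $\alpha_t$-mixing in the potential (the nonstandard off-diagonal structure of \eqref{eq:divisible-segment-learner-beta-update} needs care here), and confirm that the product of stay-probabilities collapses cleanly to $\tfrac1{T+1}$ so the penalty is of the claimed order $B(1+\log T)$; by contrast the chaining with Lemma~\ref{lem:indivisible-segment-regret} is routine once expert $A$ is identified with the embedded FTAL learner.
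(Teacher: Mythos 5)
Your proposal follows the same decomposition as the paper's proof: identify the base prediction $u_t$ with the embedded learner of Algorithm~\ref{fig:indivisible-segment-learner} (so Lemma~\ref{lem:indivisible-segment-regret} bounds $\sum_t\ell_t(u_t)$ against any $g\in\sG$), view the $\vec\beta$-update as two-expert Switching so that the mixed prediction competes with either the constant-$u_t$ or the constant-$v_t$ expert at cost $B(1+\log T)$, then chain the two bounds for part (i) and use the switching bound alone with $v=1$ for part (ii). The genuine difference is how you obtain the switching guarantee: the paper invokes its appendix result (Corollary~\ref{cor:switching-two-expert-regret-bound}, proved via the switching-prior representation of the weights, Lemmas~\ref{lem:switching-prior-sum}--\ref{lem:switching-potential} and Theorem~\ref{thm:switching-regret-bound}), whereas you re-derive the two-expert, zero-switch case inline by telescoping the potential $-\tfrac1\eta\log(\vec1^\T\vec\beta_t)$ and lower-bounding the final mass along the diagonal (never-switch) path. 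This buys a self-contained argument that avoids the general machinery over switching sequences, at the price of handling only the special case actually needed here.

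Two caveats. First, a constant slip: your path estimate carries $T$ stay-factors, $\prod_{t=1}^T(1-\alpha_t)=\tfrac1{T+1}$, plus the prior weight $\tfrac12$, so the penalty you actually derive is $\tfrac1\eta\log\bigl(2(T+1)\bigr)$, which exceeds the claimed $B(1+\log T)$ for $T\in\{1,2\}$. The paper's route avoids this because marginalizing the prior over the last index leaves only $T-1$ transition factors, $\prod_{t<T}(1-\alpha_t)=\tfrac1T$, hence penalty $\tfrac1\eta\log(2T)\leq\tfrac1\eta(1+\log T)$; you recover exactly this by stopping your telescope at the loss-reweighting step of round $T$, before the final $\alpha_T$-mixing (which cannot influence any prediction). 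Second, your telescoping needs the mixing step not to increase total mass, $\vec1^\T\vec\beta_t\leq\vec1^\T\tilde{\vec\beta}_t$; this holds with equality (the paper's Lemma~\ref{lem:switching-beta-sum}) for the standard switching update, but fails for the matrix as literally printed in \eqref{eq:divisible-segment-learner-beta-update}, whose $(2,1)$ entry reads $\alpha_te^{-\eta\ell_t(v_t)}$ where standard Switching would have $\alpha_te^{-\eta\ell_t(u_t)}$. That entry is evidently a typo --- the paper's own proof likewise treats the update as an instance of Algorithm~\ref{fig:switching} --- so flagging it as ``needs care,'' as you did, is the right instinct, but a complete write-up should state explicitly that the argument uses the standard update, since with the printed matrix the mass-preservation step (and hence also the paper's own citation of Corollary~\ref{cor:switching-two-expert-regret-bound}) breaks down.
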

\begin{proof}
For the proof we first argue on the structure Algorithm~\ref{fig:divisible-segment-learner} and then conclude either statement.

\medskip\par\noindent\textit{Structure: Switching.}
First, note that the prediction \eqref{eq:divisible-segment-learner-prediction} in conjunction with the weight update \eqref{eq:divisible-segment-learner-beta-update} is an instance of Switching with (two) experts predicting $(u_t, v_t)$, switching rate $\alpha_t=(t+1)^{-1}$  and $\eta$-exp-concave loss functions $\ell_t$, all for $1\leq t\leq T$.
From Corollary~\ref{cor:switching-two-expert-regret-bound} we obtain
\(
    \sum_{1\leq t\leq T} \ell_t(y_t)
    \leq 
    \sum_{1\leq t\leq T} \ell_t(z_t) + B\cdot (1 + \log T)
    \text{,\enskip for\enskip } z_t\in\{u_t, v_t\}
    \text,
    \label{eq:divisible-segment-regret-proof-0}
\)
where we considered switching sequence $i_1=i_2=\dots=1$ for $z_t=u_t$ (no switch), similarly for $z_t=v_t$.

\medskip\par\noindent\textit{Structure: Algorithm~\ref{fig:indivisible-segment-learner}.}
Second, note that Algorithm~\ref{fig:indivisible-segment-learner} (with regularity conditions and parameter choice as in Lemma~\ref{lem:indivisible-segment-regret}) is embedded in  Algorithm~\ref{fig:divisible-segment-learner} to compute the base prediction $u_t$.
So Lemma~\ref{lem:indivisible-segment-regret} implies 
\(
    \sum_{1\leq t\leq T} \ell_t(u_t)
    \leq
    \sum_{1\leq t\leq T} \ell_t(g(\vx_t)) + A \cdot (1 + \log T)
    \text{,\enskip for any $g\in\sG$.}
    \label{eq:divisible-segment-regret-proof-1}
\)

\medskip\par\noindent\textit{Lemma~\ref{lem:divisible-segment-regret}\ref{lem:divisible-segment-regret-a}.}
Fix any $g\in\sG$, where $g(\vx) = \vw^\T\vx$ and $\vw\in\sW$, we get
\(
    \sum_{1\leq t\leq T}\ell_t(y_t)
    &\stackrel{\text{\ref{it:divisible-segment-regret-proof-proof-0}}}\leq \sum_{1\leq t\leq T} \ell_t(g(\vx_t)) + (A + B)\cdot(1 + \log T)
    \\
    &\stackrel{\text{\ref{it:divisible-segment-regret-proof-proof-1}}}\leq \sum_{1\leq t\leq T} \ell_t(f(\vx_t)) + (A + B)\cdot(1 + \log T)
    \text,
\)
where we
\begin{enumerate*}[label=(\alph*)]
    \item
    \label{it:divisible-segment-regret-proof-proof-0}
    plugged \ref{eq:divisible-segment-regret-proof-1} into \eqref{eq:divisible-segment-regret-proof-0} for $z_t=u_t$ and
    \item 
    \label{it:divisible-segment-regret-proof-proof-1}
    choose $f\in\sF$, where $f(\vx, y) = (1-v)\cdot\vw^\T\vx + v\cdot y$ and $v=0$.    
\end{enumerate*}

\medskip\par\noindent\textit{Lemma~\ref{lem:divisible-segment-regret}\ref{lem:divisible-segment-regret-b}.}
We choose $f(\vx, y) = (1-v)\cdot\vw^\T\vx + v \cdot y$, where $\vw = \vec0$ and $v=1$ and plug this into \eqref{eq:divisible-segment-regret-proof-0} for $z_t = v_t$, so
\(
    \sum_{1\leq t\leq T} \ell(y_t) \leq \sum_{1\leq t\leq T} \ell_t(f(\vx_t, v_t)) + B\cdot(1 + \log T)
    \text.
    \tag*{\qedhere}
\)
\end{proof}

\subsection{Analysis}
We can now combine our main result on the HPF meta algorithm with our particular choice of learners for linear functions from the previous sections to obtain a loss guarantee.

\begin{thm}[LHPF Loss]
\label{thm:lhpf-regret}
Consider Algorithm~\ref{fig:hpf} with
\begin{itemize}
    \item
    learner $L_0$ with initial parameters (state and forecaster, uniform for all $\sS\in\sD$) as specified in Algorithm~\ref{fig:divisible-segment-learner},
    \item
    learner $L_1$ with initial parameters (state and forecaster, uniform for all $\sS\in\sH\setminus\sD$)) as specified in Algorithm~\ref{fig:indivisible-segment-learner} and
    \item
    let the regularity conditions in Lemma~\ref{lem:indivisible-segment-regret} hold.
\end{itemize}
For any CPF with partition \sP\ induced by hierarchical partition \sH\ and forecasters $\{f_\sS : f_\sS(\vx) = \vw_\sS^\T\vx\text{ and }\vw_\sS\in\sW\}_{\sS\in\sP}$ LHPF satisfies
\(
    \loss^{\mathrm{LHPF}}_T
    \leq
    \loss^{\mathrm{CPF}}_T + 
    \left(A\cdot\sz\sP + B\cdot C_{\sP,\sH}\right)\cdot\left(1 + \log T\right)
    \text,
\)
where $C_{\sH,\sP} := \left\lvert\left\{\sS\in\sH : \sS \text{ is divisible there ex. }\sS'\in\sP\text{ \st\ }\sS\supseteq\sS'\right\}\right\rvert$.
\end{thm}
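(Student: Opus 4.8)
The plan is to instantiate Theorem~\ref{thm:hpf-total-loss} with the two concrete learners of this section and then replace the abstract regret terms $p^\sS_T$, $q^\sS_T$ and $r^\sS_T$ by the explicit logarithmic bounds from Lemmas~\ref{lem:indivisible-segment-regret} and~\ref{lem:divisible-segment-regret}, finishing with a purely combinatorial count of how often $A$ and $B$ occur. First I would fix the regret functions $R_1(m,\cdot,\cdot) := A(1+\log m)$ and $R_0(m,\cdot,\cdot) := (A+B)(1+\log m)$ and check that Assumption~\ref{asm:hpf} holds under these learners: Assumption~\ref{asm:hpf-sanity} follows since the linear class $\sG$ of \eqref{eq:indivisible-segment-forecaster} is non-empty (as $\sW\neq\emptyset$) and, by Lemma~\ref{lem:indivisible-segment-regret}, each of its members attains regret $R_1$ under $L_1$; Assumption~\ref{asm:hpf-embedding} is exactly Lemma~\ref{lem:divisible-segment-regret-a}; and Assumption~\ref{asm:hpf-projection} is exactly Lemma~\ref{lem:divisible-segment-regret-b}. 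The same application of Lemma~\ref{lem:indivisible-segment-regret} shows that the competing CPF's linear forecasters $\{f_\sS\}_{\sS\in\sP}$ attain regret $R_1$ and hence lie in $\sG$, so Theorem~\ref{thm:hpf-total-loss} is applicable and $\loss^{\mathrm{LHPF}}_T = \loss^{\mathrm{HPF}}_T$.

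Next I would bound the three families of terms in \eqref{eq:hpf-total-loss-0}. Writing $m_\sS := \sz{\{t:\vx_t\in\sS\}}\leq T$ and using monotonicity of $\log$, I get $p^\sS_T(f_\sS) = R_1(m_\sS,\cdot,\cdot)\leq A(1+\log T)$ for indivisible $\sS\in\sP$; Lemma~\ref{lem:divisible-segment-regret-a} gives $q^\sS_T(f_\sS)\leq (A+B)(1+\log T)$ for divisible $\sS\in\sP$; and Lemma~\ref{lem:divisible-segment-regret-b} gives $r^\sS_T\leq B(1+\log T)$ for every $\sS$ entering the structure term. Since each summand then carries the common factor $(1+\log T)$, it only remains to count the total multiplicities of $A$ and $B$.

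The multiplicity of $A$ is immediate: the indivisible $\sP$-segments contribute one $A$ each via $p$, the divisible ones contribute one $A$ each via $q$, so together exactly one $A$ per element of $\sP$, giving $A\cdot\sz\sP$. For $B$, I would show that $C_{\sH,\sP}$ decomposes as a disjoint union matching the two sources of $B$-terms. The $q$-terms supply one $B$ for each divisible $\sS\in\sP$; these are exactly the $\sS\in\sH$ satisfying the defining condition of $C_{\sH,\sP}$ through the witness $\sS=\sS'\in\sP$. The $r$-terms supply one $B$ for each $\sS\in\sH$ with $\sS\supset\sS'$ for some $\sS'\in\sP$, and every such $\sS$ is automatically divisible since it strictly contains a segment of the tree $\sH$ and hence has a child. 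These two families are disjoint, because a partition cannot contain a segment strictly inside another of its members, and their union is precisely $\{\sS\in\sH:\sS\text{ divisible and }\exists\,\sS'\in\sP,\ \sS\supseteq\sS'\}$, \ie\ of size $C_{\sH,\sP}$. Collecting the factor $(1+\log T)$ then yields the stated bound.

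The main obstacle I anticipate is not any individual inequality but this final bookkeeping: reconciling the two superficially distinct index sets appearing in Theorem~\ref{thm:hpf-total-loss} (the divisible $\sP$-segments and the strict $\sH$-ancestors of $\sP$) with the single closed form $C_{\sH,\sP}$. The key observation that makes the count clean is that strict containment inside $\sH$ forces divisibility, so the divisibility qualifier in $C_{\sH,\sP}$ is vacuous on the ancestor part and constrains only the contribution coming from $\sP$ itself.
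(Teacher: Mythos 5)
Your proposal takes essentially the same route as the paper's proof: instantiate Theorem~\ref{thm:hpf-total-loss} with the learners of Algorithms~\ref{fig:indivisible-segment-learner} and~\ref{fig:divisible-segment-learner}, bound the terms $p^\sS_T$, $q^\sS_T$, $r^\sS_T$ via Lemmas~\ref{lem:indivisible-segment-regret} and~\ref{lem:divisible-segment-regret}, and then merge the index sets. Your final bookkeeping (one $A$ per element of $\sP$; the two $B$-families are disjoint because $\sP$ is a partition; strict $\sH$-ancestors are automatically divisible, so the divisibility qualifier in $C_{\sH,\sP}$ only bites on the $\sS=\sS'$ part) is precisely the paper's ``transformed the sum ranges'' step made explicit, and it is correct; your verification of Assumption~\ref{asm:hpf} is even more careful than the paper, which leaves it implicit.

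The one flaw is your opening choice $R_0(m,\cdot,\cdot):=(A+B)(1+\log m)$, \emph{uniform in the competitor}. The regret formalism \eqref{eq:regret-bound} deliberately lets $R$ depend on the competing forecaster $f$, and Lemma~\ref{lem:loss-hpf-partition} defines $r^\sS_T$ as $\min_f R_0(n,f^\sS_1,f)$ over projection forecasters; if $R_0$ is constant in $f$, this minimum equals $(A+B)(1+\log n)$, so your later claim $r^\sS_T\leq B(1+\log T)$ contradicts your own setup, and taken literally your argument only yields the weaker bound $A\cdot\sz\sP+B\cdot C_{\sH,\sP}+A\cdot\sz{\{\sS\in\sH:\sS\supset\sS'\text{ for some }\sS'\in\sP\}}$. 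The fix is immediate: define $R_0(T,f_1,f):=B(1+\log T)$ when $f(\vx,y)=y$ for all $(\vx,y)$, and $R_0(T,f_1,f):=(A+B)(1+\log T)$ for the embedding forecasters of Assumption~\ref{asm:hpf-embedding}; this is well defined (no $f$ can be both a projection and an embedding, since $y\mapsto y$ is not constant in $y$) and both values are licensed by Lemma~\ref{lem:divisible-segment-regret}. With this $f$-dependent $R_0$ your bounds $q^\sS_T\leq(A+B)(1+\log T)$ and $r^\sS_T\leq B(1+\log T)$ hold as claimed, and the rest of your argument goes through verbatim.
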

\begin{proof}
We get 
\(
    &\loss^{\mathrm{LHPF}}_T - \loss^{\mathrm{CPF}}_T
    \\
    &\qquad\stackrel{\text{\ref{it:lhpf-regret-proof-0}}}\leq 
    \sum_{\mathclap{\substack{\sS\in\sP: \\\text{$\sS$ indiv.}}}} A (1 + \log T) + 
    \sum_{\mathclap{\substack{\sS\in\sP: \\\text{$\sS$ div.}}}} (A + B) (1 + \log T) +
    \sum_{\mathclap{\substack{\sS\in\sH: \\\sS\supset\sS'\in\sP\\}}} B (1 + \log T)
    \\
    &\qquad\stackrel{\text{\ref{it:lhpf-regret-proof-1}}}=
    \bigg(
    A\cdot\sz\sP + \sum_{\mathclap{\substack{\sS\in\sP: \\\text{$\sS$ div.}}}} B + \sum_{\mathclap{\substack{\sS\in\sH:\\\sS\supset\sS'\in\sP}}} B
    \bigg) \cdot (1 + \log T)
    \\
    &\qquad\stackrel{\text{\ref{it:lhpf-regret-proof-2}}}=
    (A\cdot\sz\sP + B\cdot C_{\sH,\sP})\cdot (1 + \log T)
    \text,
\)
where we 
\begin{enumerate*}[label=(\alph*)]
    \item
    \label{it:lhpf-regret-proof-0}
    applied lemmas \ref{lem:indivisible-segment-regret} and \ref{lem:divisible-segment-regret},
    \item
    \label{it:lhpf-regret-proof-1}
    merged the sum over $A$'s and rearranged,
    \item
    \label{it:lhpf-regret-proof-2}
    transformed the sum ranges,
\end{enumerate*}
\(
    &\{\sS\in\sP:\sS\text{ is divisible}\}
    \tag*{(range of left sum)}
    \\
    &\qquad=
    \{\sS\in\sH:\sS\text{ is divisible and there exists }\sS'\in\sP\text{ \st\ }\sS = \sS'\}
    \text,
    \\
    &\{\sS\in\sH:\sS\supset\sS'\text{ for some }\sS'\in\sP\}
    \tag*{(range of right sum)}
    \\
    &\qquad=
    \{\sS\in\sH:\sS\text{ is divisible and there exists }\sS'\in\sP\text{ \st\ }\sS\supset\sS'\}
    \text,    
\)
merged the sums and rearranged.
\end{proof}

\noindent
Note that the loss bound in Theorem~\ref{thm:lhpf-regret} unveils an interesting structure of the regret of LHPF suffers relative to a CPF competitor.

First, there is regret for learning the forecasters of the competing CPF:
For every forecaster $f_\sS$ associated to a segment \sS\ from the CPF's partition \sP\ we pay regret at most $A\cdot(1 + \log T)$ to learn (the parameters $\vw_\sS$ of) $f_\sS$.
This regret is induced by Algorithm~\ref{fig:indivisible-segment-learner} (and its embedded version in Algorithm~\ref{fig:divisible-segment-learner}).

Second, there is regret for learning the partition of the competing CPF.
We have to pay at most regret $B\cdot(1 + \log T)$ to learn the partition \sP, more precisely how it is embedded in the hierarchical partition \sH.
To understand this consider the following recursive procedure:
Consider (sub-)segment \sS\ of feature space, initially $\sS=\sX$.
For $\sS\in\sP$, we stop the recursion and pay regret at most $B\cdot(1 + \log T)$, if $\sS$ is divisible or pay regret $0$, if \sS\ is indivisible (we only need to encode ``end-of-recursion'' where further recursion is possible, that is for divisible segments);
for $\sS\notin\sP$, recurse on segments $\sS_1, \sS_2, \dots$ that divide \sS.
This regret is introduced by Switching embedded in Algorithm~\ref{fig:divisible-segment-learner}.

\subsection{Choosing the Hierarchical Partition \sH}

\paragraph{Overview.}
Since \sH\ determines how effective HPF can exploit the feature space structure to generate specialized predictions its choice has significant impact on HPF's predictive power.
In the following we give three examples.

\paragraph{Fixed and Domain-specific.}
If the domain is well-structured it often is easy to exploit domain-specific information.
To illustrate this consider a forecasting problem over a bounded 2-dimensional grid (as we did for our experiments, see Section~\ref{sec:experiments}) where the relation between features and targets varies smoothly depending on grid location.
So we may assume a feature vector $\vx = (x_1', x_2', \dots, x_n', u, v)^\T$ has components $u$ and $v$ that encode a grid position (\ie\ ``side-information'') and $\vx' = (x_1', x_2' \dots, x_n')^\T$ is other information used for forecasting.
Now a reasonable choice of \sH\ can be a quad-tree decomposition of the grid at a certain tree depth.

\paragraph{Fixed and Domain-agnostic.}
In case there is no further information a randomized hierarchical partition based on half-spaces has proven to be useful \cite{veness2020gln}.
We construct \sH\ as follows:
Consider a segment \sS, initially $\sS = \sX\subseteq\RR^n$.
Now draw vector $\vec a$ normally with mean $\vec0$ and unit variance $\vec I$ and draw $b$ normally with mean $\mu$ and variance $\sigma$ (note these are hyperparameters and may depend on \sS).
The hyperplane $\vx \mapsto \frac{\vec a^\T}{\lVert\vec a\rVert} \vx - b$ (note the normal is isotropic) divides \sS\ into halfspaces $\sS_1$ and $\sS_2$ which we add to the (intermediate) hierarchical partition.
We now repeat this procedure recursively on $\sS_1$ and $\sS_2$ or stop if a stopping criterion is met, for instance if the number of recursion steps exceeds a threshold.

\paragraph{Adaptive and Domain-agnostic.}
If the feature space allows a distance metric $d$, then it is possibly to efficiently maintain a hierarchical partition through Cover Trees (CTs) \cite{beygelzimer06}.
Consider a CT with depth $N$ and some parameter $\delta > 1$.
The nodes of the CT correspond to data points (\eg\ formed a growing set of feature vectors) \st\ siblings $u$ and $v$ with depth $n$ are far away in the sense that $d(u, v) > \delta^{N - n}$ and any child $c$ of $u$ is close to $u$ in the sense $d(u, c) \leq \delta^ {N-n}$.
As noted in \cite{tziortziotis14} this tree structure induces a context tree similar to \cite{willems1995ctw} on the data points, in other words a hierarchic partition.
For more details we defer the reader to \cite{tziortziotis14}.

\section{Experiments}
\label{sec:experiments}

\subsection{Specification}

\paragraph{The Setting.}
We consider forecasting radar-based precipitation measurements over the UK\footnote{%
The data set was sourced from \href{https://www.metoffice.gov.uk/research/climate/maps-and-data/data/haduk-grid/datasets}{data collected from the UK Met Office NIMROD system} licensed by the UK Met Office under the \href{https://www.nationalarchives.gov.uk/doc/open-government-licence/version/3/}{Open Government Licence 3.0}.
} as previously done in \cite{ravuri2021gan}.
The precipitation data is represented as a 1536 x 1280 grayscale video (lossless compression):
every pixel corresponds to the average precipitation value in mm/hr measured over a 1 km x 1 km area quantised to 12 bits,
the data ranges from 1 Jan 2016 to 31 Dec 2019 sampled in 5 min intervals.
Thus at time $t$, for any fixed location $\vu = (x, y)$ (the 1 km² grid cell), our goal is to predict the precipitation at time $t + H$  for $H= 5\text{ min}, 10\text{ min}, \dots$.
The dataset split matches \cite{ravuri2021gan}:
we use 2017 and 2018 for hyperparameter selection, 2019 is test data.
We constrain our evaluation to those pixels that have radar coverage over the area spanned by a 100 pixel radius to ensure sufficient context for forecasting.

\paragraph{Motion Estimation.}
Besides the pixel data we found it useful to incorporate motion information by estimating the motion field implied by the moving precipitation structures.
To estimate the motion vector at location \vu\ we employ Switching over a set of motion vector candidates $\vec d_1, \vec d_2, \dots$.
Each candidate $\vec d$ corresponds to an expert that predicts the pixels surrounding \vu\ by the pixels surrounding $\vu -\vec d$.
Hence, we use the squared error implied by pixel matching as loss and constrain pixel matching to the circular patch with radius 33.
Finally, we use the motion vector candidate (expert) with maximum Switching weight as motion vector estimate.
We use the union of $4r$ vectors spaced uniformly on the outline of a circle with radius $r=1, 2, 4, 8$.
To smooth the motion field we estimate motion vectors for every 8-th pixel in x- and y-direction and use linear interpolation to fill the gaps.

\paragraph{Features.}
Note that every combination of location $\vu$ and forecasting horizon $H$ likely has different feature vectors.
Our feature vectors are based on a set of context pixels determined by motion estimation.
At time $t$ a pixel at location $\vu$ has an associated motion vector estimate $\vec d_{\vec u}$ to track where the pixel was located at time $t-1$ (``where it came from''): $\vu - \vec d_{\vec u}$.
Now we construct the feature vector $\vx$ to predict a pixel at location $\vu$ at time $t+H$ as follows:
First, by accumulating the motion vector estimates we determine the path $\vu_0, \vu_1, \dots, \vu_H$ of this pixel over $H$ time steps, that is: $\vu_0 = \vu, \vu_1 = \vu_0 - \vec d_{\vu_0}, \vu_2 = \vu_1 - \vec d_{\vu_1}, \dots$.
Intuitively this means at time $t$ the pixel at location $\vu_H$ will be at location $\vu$ at time $t+H$.
So to predict the pixel at location $\vu$ at time $t+H$ we use the set of context pixels close to $\vu_H$ at time $t$.
We construct $\vx$ by rotating the circular patch of all pixels with distance at most 7 from $\vu_H$ by $-\angle(\vu_H, \vu)$ to make the context pixels invariant to the orientation of $\vu_H$ relative to $\vu$.

\paragraph{Forecaster.}
Our main forecaster is based on LHPF. Algorithm~\ref{fig:hpf} with Algorithm~\ref{fig:indivisible-segment-learner} as learner for indivisible segments and Algorithm~\ref{fig:divisible-segment-learner} as learner for divisible segments, both using the squared error.
We use a quad-tree partitioning of the image coordinates with 6 levels as hierarchical partition.

\paragraph{Baselines and Metrics.}
We compare our forecasts to the naive persistence forecaster, to PySTEPS, a simulation-based forecaster, to the GAN from \cite{ravuri2021gan} and UNet, two deep learning baselines.
For details on the baseline models we defer the reader to \cite{ravuri2021gan}.
To measure the forecasting performance we consider the MSE and the Critical Success Index (CSI) at precipitation thresholds 1 mm/hr, 2 mm/hr, 4 mm/hr and 8 mm/hr, see \cite{ravuri2021gan} for details.
The CSI measures how good we estimate the location of precipitation at a given intensity, it is of domain-specific interest;
he MSE is of interest, since LHPF attempts to asymptotically minimize the MSE.

\subsection{Results}

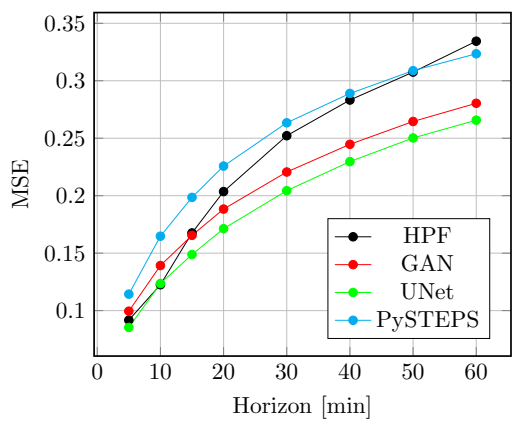
\begin{figure}
    \centering
    \begin{tikzpicture}[scale=0.8]
        \begin{axis}[axis line style=thick,xmajorgrids,ymajorgrids,legend style={anchor={south east},at={(0.95,0.05)}},xlabel={Horizon [min]},ylabel={MSE}]
        \pgfplotstableread[col sep=comma]{mse.csv}\mse;
        \addplot[color=black, mark=*, mark options={solid}] table [x=Horizon, y=HPF] {\mse};
        \addlegendentry{HPF}
        \addplot[color=red, mark=*, mark options={solid}] table [x=Horizon, y=GAN] {\mse};
        \addlegendentry{GAN}
        \addplot[color=green, mark=*, mark options={solid}] table [x=Horizon, y=UNet] {\mse};
        \addlegendentry{UNet}
        \addplot[color=cyan, mark=*, mark options={solid}] table [x=Horizon, y=PySTEPS] {\mse};
        \addlegendentry{PySTEPS}
        \end{axis}
    \end{tikzpicture}    
    \caption{MSE breakdown over forecasting horizon for various forecasters on the test period (2019).}
    \label{fig:experiment-mse}
\end{figure}

\begin{figure}
    \centering
    \begin{tikzpicture}[scale=0.8]
        \begin{groupplot}[group style={group size=2 by 2,xticklabels at=edge bottom,yticklabels at=edge left,xlabels at=edge bottom,ylabels at=edge left,horizontal sep=3em,vertical sep=4em},axis line style=thick,xmajorgrids,ymajorgrids,legend style={at={(0.95,0.95)}},xlabel={Horizon [min]},ylabel={CSI},xmin=0,xmax=65,ymin=-0.1,ymax=0.7]
        \nextgroupplot[title={\bfseries 1 mm/hr}]
        \pgfplotstableread[col sep=comma]{csi1.csv}\mse;
        \addplot[color=black, mark=*, mark options={solid}] table [x=Horizon, y=HPF] {\mse};
        \addplot[color=red, mark=*, mark options={solid}] table [x=Horizon, y=GAN] {\mse};
        \addplot[color=green, mark=*, mark options={solid}] table [x=Horizon, y=UNet] {\mse};
        \addplot[color=cyan, mark=*, mark options={solid}] table [x=Horizon, y=PySTEPS] {\mse};
        \nextgroupplot[title={\bfseries 2 mm/hr}]
        \pgfplotstableread[col sep=comma]{csi2.csv}\mse;
        \addplot[color=black, mark=*, mark options={solid}] table [x=Horizon, y=HPF] {\mse};
        \addlegendentry{HPF}
        \addplot[color=red, mark=*, mark options={solid}] table [x=Horizon, y=GAN] {\mse};
        \addlegendentry{GAN}
        \addplot[color=green, mark=*, mark options={solid}] table [x=Horizon, y=UNet] {\mse};
        \addlegendentry{UNet}
        \addplot[color=cyan, mark=*, mark options={solid}] table [x=Horizon, y=PySTEPS] {\mse};
        \addlegendentry{PySTEPS}
        \nextgroupplot[title={\bfseries 4 mm/hr}]
        \pgfplotstableread[col sep=comma]{csi4.csv}\mse;
        \addplot[color=black, mark=*, mark options={solid}] table [x=Horizon, y=HPF] {\mse};
        \addplot[color=red, mark=*, mark options={solid}] table [x=Horizon, y=GAN] {\mse};
        \addplot[color=green, mark=*, mark options={solid}] table [x=Horizon, y=UNet] {\mse};
        \addplot[color=cyan, mark=*, mark options={solid}] table [x=Horizon, y=PySTEPS] {\mse};
        \nextgroupplot[title={\bfseries 8 mm/hr}]
        \pgfplotstableread[col sep=comma]{csi8.csv}\mse;
        \addplot[color=black, mark=*, mark options={solid}] table [x=Horizon, y=HPF] {\mse};
        \addplot[color=red, mark=*, mark options={solid}] table [x=Horizon, y=GAN] {\mse};
        \addplot[color=green, mark=*, mark options={solid}] table [x=Horizon, y=UNet] {\mse};
        \addplot[color=cyan, mark=*, mark options={solid}] table [x=Horizon, y=PySTEPS] {\mse};
        \end{groupplot}
    \end{tikzpicture}    
    \caption{CSI breakdown over forecasting horizon for various forecasters on the test period (2019).}
    \label{fig:experiment-csi}
\end{figure}
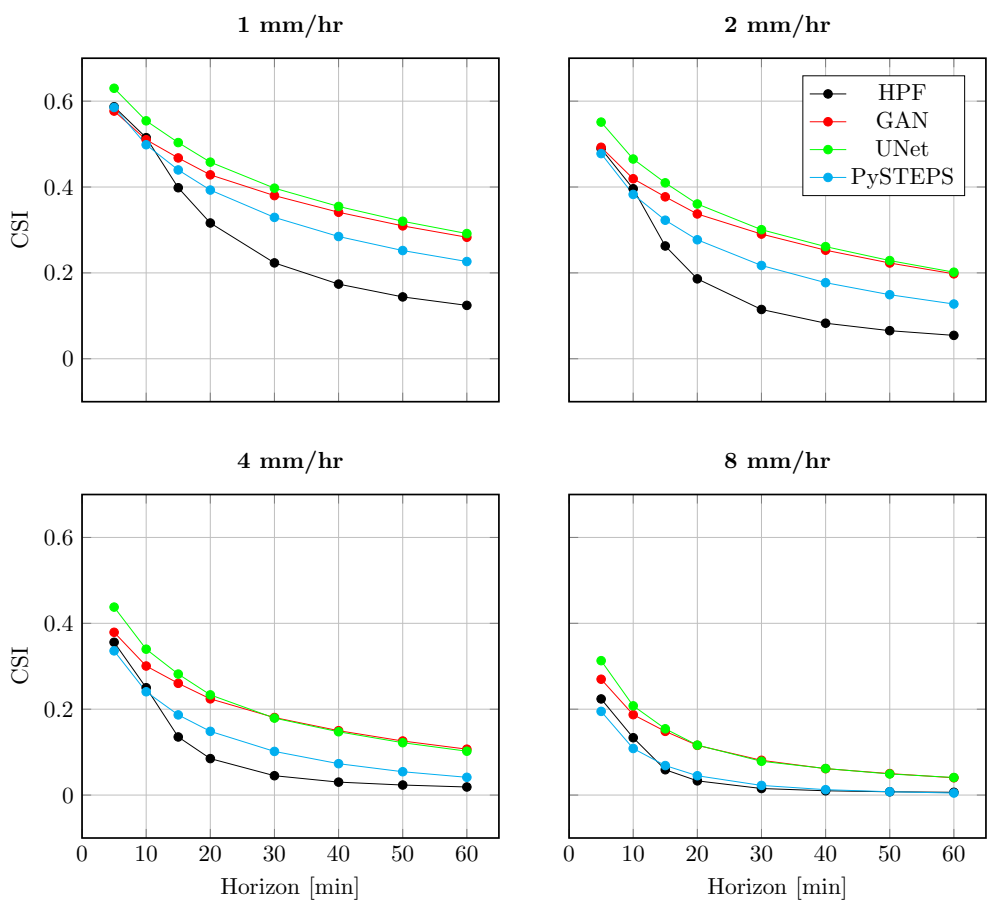

Figure~\ref{fig:experiment-mse} and Figure~\ref{fig:experiment-csi} we break down the MSE and CSI over the forecasting horizons.
Broadly speaking, depending on the exact evaluation setting LHPF is comparable to the deep learning baselines for short forecasting horizons up to roughly 15 (MSE) or 20 minutes (MSE) and starting from there performs degrades quickly and the other forecasters perform significantly better.
For the CSI this pattern is more pronounced than for the MSE.
This behavior is plausible:
Our LHPF implementation implicitly minimizes the MSE which in general seems to lead to blurry predictions.
Clearly, blurring reduces the spatial accuracy of predictions, hence the CSI - which measures spatial accuracy - will be negatively affected.
Also, the motion estimation procedure that LHPF incorporates to build features accumulates motion estimation errors with increasing forecasting horizon.
This will have a negative impact on the forecasting performance at long horizons.

Besides the deficits on longer-term horizons the short-horizon performance is remarkable given that LHPF has orders of magnitude less parameters than the best performing deep learning models GAN and UNet, learns and predicts on the fly and based on that executes significantly faster than the deep learning models.

\section{Conclusion}
\label{sec:conclusion}

In this work we introduce the family of Hierarchical Partition Forecasters (HPFs) that follow an intuitive divide-and-conquer principle:
Divide the feature space hierarchically, assign specialized forecasters to the evolving feature sub-spaces and blend their forecasts to obtain a good forecast. Blending compensates for not knowing which feature sub-spaces are worth specializing.
In this light an idealized forecaster \emph{may know} how to partition the feature space into sub-spaced and \emph{may know} the parameters of a specialized forecaster for either part of the partition.
We term such an idealized forecaster Constant Partition Forecaster (CPF).

We specify an online meta-algorithm that estimates a sequence of HPFs and regards the learning algorithms for learning the specializing forecasters within HPF (``learners'') as parameters.
This meta-algorithm can perform almost as well as the best CPF in hindsight, given the learners are powerful enough.
Our analysis of this meta-algorithm confirms an intuitive view:
First, the regret incurred relative to some CPF consists of learning the CPF partitioning and the predictors associated to every partition;
Second, the regret guarantees on the actual learning algorithms determine the regret guarantees of the meta-algorithm.
Besides these very abstract results we consider a concrete example:
learning LHPFs - HPF with linear forecasters - with online learning algorithms based on online second order optimization and Switching.
Our results reveal that for exp-concave losses the proposed approach yields $O(\log T)$ regret relative to a CPF with linear forecasters.
An experimental study underpins the usefulness of our approach, as we achieve performance comparable to significantly more complex deep learning models in various settings, yet it also reveals limitations caused my model complexity.

There are several research directions for future work.
Our analysis framework links theoretic guarantees on learners to those of the entire meta-algorithm.
Hence, exploring learners with stronger theoretic guarantees defines an interesting research direction, as these guarantees would carry forward to the meta-algorithm.
For instance, learners with shifting regret guarantees should imply that shifting regret guarantees (of the meta-algorithm) \wrt\ shifting CPFs rather than a fixed CPF.
Finally, the existing algorithms and, more generally, algorithmic extensions deserve further experimental analysis.

\paragraph{Acknowledgement.}
For helpful comments, discussions and engineering support the author would like to thank, in alphabetic order,
Elliot Catt,
Chris Dyer,
Tim Genewein,
George Holland,
Marcus Hutter,
Remi Lam,
Shakir Mohammed,
Suman Ravuri,
Alvaro San\-chez-Gon\-zalez,
Jacklynn Stott,
Joel Veness
and
Matthew Willson.

\bibliographystyle{plain}
\bibliography{references.bib}

\clearpage
\appendix
\section{Switching}

\subsection{Introduction}

\paragraph{Overview.}
Switching is method for prediction with expert advise.
Prediction with expert advise is a special case of the sequential prediction problem introduced earlier:
That is, in the $t$-th round $m$ expert predictions $\vy^1_t, \vy^2_t, \dots, \vy^m_t\in\sY$ make up the the feature vector, $\vx_t = (\vy^1_t, \vy^2_t, \dots, \vy^m_t)$.
The forecaster combines these expert predictions.
Algorithm~\ref{fig:switching} depicts Switching in the sequential prediction (with expert advise) setting.
We denote the loss accumulated by Algorithm~\ref{fig:switching} by
\(
    \loss^{SW}_T := \sum_{1\leq t\leq T} \ell_t(\vy_t)
    \text.
    \label{eq:switching-loss}
\)

\begin{alg}
\centering
\small
\fboxsep=0.4\baselineskip
\fbox{\begin{minipage}{0.9\textwidth}
    \noindent
    \begin{tabular}{@{}l@{}@{\hskip2pt}c@{\hskip2pt}p{0.8\columnwidth}@{}}
        \textbf{Input}&\textbf:&
        Sequence $\vx_{1:T}$ of predictions of $m$ experts,\newline
        sequence $\alpha_{1:T}$ of switching rates from $(0, 1)$ and\newline
        sequence $\ell_{1:T}$ of $\eta$-exp. concave loss functions.
        \\[0.2\baselineskip]
        \textbf{Output}&\textbf:& (Mixed) predictions $\vy_{1:T}$.
        \\[0.2\baselineskip]
    \end{tabular}
    \\[.5\baselineskip]
    Set $\beta^i_0 = \frac1m$, for all $1\leq i\leq m$.\\[0.5\baselineskip]
    For $t = 1, 2, \dots, T$ do:
    \begin{enumerate}[label=\arabic*.,ref=\arabic*]
        \item
        Observe expert predictions $\vx_t = (\vy_t^1, \vy_t^2, \dots, \vy_t^m)$ and combine,
        \(
            \vy_t
            = \frac{\sum_{1\leq i\leq m} \beta^i_{t-1}\cdot \vy^i_t}{\sum_{1\leq i\leq m} \beta^i_{t-1}}
            \text.
        \)
        \item
        Observe loss $\ell_t$ and update weights
        \(
            \beta^i_t
            = (1-\alpha_t)\beta^i_{t-1}\cdot e^{-\eta\ell^i_t} + \frac{\alpha_t}{m-1}\sum_{j\neq i}\beta^j_{t-1}\cdot e^{-\eta\ell^j_t}
            \text,
            \label{eq:switching-update}
        \)
        where the $i$-th expert's loss  is $\ell^i_t := \ell_t(\vy^i_t)$, for $1\leq i\leq m$.
    \end{enumerate}
\end{minipage}}
\caption{Switching.}
\label{fig:switching}
\end{alg}

\paragraph{Switching Sequences, their Loss and Prior.}
In the following we use a sequence $i_{1:T}$ over $\{1, 2, \dots, m\}$ to formalize an \emph{idealized} forecaster for prediction with expert advise.
In the $t$-th round this forecaster simply predicts the $i_t$-th expert's prediction $\vy^{i_t}_t$ and accumulates loss
\(
    \loss(i_{1:T}) := \sum_{1\leq t\leq T} \ell_t^{i_t}
    \text{,\enskip where\enskip }
    \ell_t^i := \ell_t(\vy^i_t)
    \text.
\)
For brevity we refer to a specific idealized forecaster as \emph{switching sequence} $i_{1:T}$.
The theoretical guarantees on Switching are based on a link between the weights $\beta^i_t$ to a prior over switching sequences.
Given a sequence $\alpha_{1:\infty}$ of switching rates over $(0, 1)$ this \emph{switching prior} as given by
\(
    w(i_{1:t}) :=
    \begin{cases}
    1, & \text{if $t=0$},\\
    \frac1m\text, & \text{if $t=1$,}\\
    w(i_{<t})\cdot(1-\alpha_{t-1})\text, & \text{if $t>1$ and $i_t=i_{t-1}$,}\\
    w(i_{<t})\cdot\frac{\alpha_{t-1}}{m-1}\text, & \text{if $t>1$ and $i_t\neq i_{t-1}$.}
    \end{cases}
    \text.
    \label{eq:switching-prior}
\)

\subsection{Analysis}
\paragraph{Technical Lemmas.}
We now parenthesize some technical lemmas before proving a loss bound for Switching.
As mentioned before the main technical point is to establish a link between the weights $\beta^j_t$ and the switching prior $w$ by recognizing that $\beta^j_t$ is the expected exponentiated loss $e^{-\eta \loss(i_{1:t})}$ of all switching sequences with length $t+1$ ending with $i_{t+1}=j$.

\begin{lem}[$\beta$'s are Expectations]
\label{lem:switching-prior-sum}
For $t\geq0$ we have
\(
    \beta^j_t = \sum_{\substack{i_{1:t+1}:\\i_{t+1}=j}} w(i_{1:t+1})\cdot e^{-\eta \loss(i_{1:t})}
    \text.
\)
\end{lem}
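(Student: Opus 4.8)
The plan is to prove the identity by induction on $t$, turning the weight-update recursion \eqref{eq:switching-update} into the prior recursion \eqref{eq:switching-prior}. Throughout I would instantiate \eqref{eq:switching-update} at the target index $j$ and rename its dummy summation variable to $k$, so that $\beta^j_t = (1-\alpha_t)\beta^j_{t-1}e^{-\eta\ell^j_t} + \tfrac{\alpha_t}{m-1}\sum_{k\neq j}\beta^k_{t-1}e^{-\eta\ell^k_t}$. For the base case $t=0$, I would check both sides directly: the only switching sequence of length one ending in $j$ is $i_1=j$, so the right-hand side is $w(j)\cdot e^{-\eta\loss(i_{1:0})}$, and since the loss of the empty sequence is an empty sum (hence $0$) and $w(j)=\tfrac1m$ by \eqref{eq:switching-prior}, this equals $\tfrac1m=\beta^j_0$.

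For the inductive step I would assume the claim for $t-1$ and substitute $\beta^i_{t-1} = \sum_{i_{1:t}:\,i_t=i} w(i_{1:t})e^{-\eta\loss(i_{1:t-1})}$ for every index $i$ into the update above. The key observation is that appending $i_{t+1}=j$ to a sequence $i_{1:t}$ produces exactly the two factors in the update. First, the loss factor: since $\loss(i_{1:t})=\loss(i_{1:t-1})+\ell^{i_t}_t$, we have $e^{-\eta\ell^{i_t}_t}\cdot e^{-\eta\loss(i_{1:t-1})}=e^{-\eta\loss(i_{1:t})}$, where $i_t$ is the last symbol of the substituted sequence. Second, the prior factor: by \eqref{eq:switching-prior} the factor $(1-\alpha_t)$ turns $w(i_{1:t})$ into $w(i_{1:t+1})$ precisely in the non-switch case $i_t=j$ (the first summand), while the factor $\tfrac{\alpha_t}{m-1}$ does the same in the switch case $k\neq j$ (the second summand). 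Here it is essential that the rate $\alpha_t$ appearing in \eqref{eq:switching-update} coincides with the rate $\alpha_{(t+1)-1}$ that \eqref{eq:switching-prior} uses when extending a length-$t$ sequence to length $t+1$.

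Carrying this out, the first summand becomes $\sum_{i_{1:t+1}:\,i_{t+1}=j,\,i_t=j} w(i_{1:t+1})e^{-\eta\loss(i_{1:t})}$ and the second becomes $\sum_{i_{1:t+1}:\,i_{t+1}=j,\,i_t\neq j} w(i_{1:t+1})e^{-\eta\loss(i_{1:t})}$. Since every length-$(t+1)$ sequence ending in $j$ satisfies either $i_t=j$ or $i_t\neq j$, these two ranges partition the set of such sequences, and their union is exactly $\sum_{i_{1:t+1}:\,i_{t+1}=j} w(i_{1:t+1})e^{-\eta\loss(i_{1:t})}$, which completes the induction. I expect the only delicate point to be the index bookkeeping: keeping $\loss(i_{1:t-1})$ versus $\loss(i_{1:t})$ straight when the new symbol is appended, and verifying that the switching rate in the weight update is the same one the prior uses for a one-step extension. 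Everything else is routine substitution and a re-indexing of sums.
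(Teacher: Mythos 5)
Your proof is correct and takes essentially the same route as the paper's: induction on $t$ with the identical base case, substituting the induction hypothesis into the weight update \eqref{eq:switching-update}, using loss additivity and the prior recursion \eqref{eq:switching-prior} to absorb the factors $(1-\alpha_t)$ and $\frac{\alpha_t}{m-1}$, and finally merging the sums over $i_t=j$ and $i_t\neq j$ into one sum over length-$(t+1)$ sequences ending in $j$. Your explicit verification of the $\alpha$-index alignment between update and prior is a detail the paper leaves implicit, but it changes nothing in substance.
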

\begin{proof}
Our prove is by induction on $t$.

\medskip\par\noindent\textit{Base: $t=0$.} ---
We have $\beta_0^j = w(j)\cdot e^{-\eta \loss(i_{1:0})} = \frac1m$, since $w(j)=\frac1m$ and $\loss(i_{1:0})=0$.

\medskip\par\noindent\textit{Step: $t>0$.} --- 
We have
\(
    \beta^j_t
    &\stackrel{\text{\ref{it:switching-prior-sum-proof-0}}}= (1-\alpha_t) \cdot \sum_{\substack{i_{1:t}:\\i_t=j}} w(i_{1:t})\cdot e^{-\eta \loss(i_{<t})}\cdot e^{-\eta\ell^j_t}\\
    &\qquad\qquad\qquad + \frac{\alpha_t}{m-1}\cdot\sum_{u\neq j} \sum_{\substack{i_{1:t}\\i_t=u}} w(i_{1:t})\cdot e^{-\eta \loss(i_{<t})}\cdot e^{-\eta \ell^u_t}
    \\
    &\stackrel{\text{\ref{it:switching-prior-sum-proof-1}}}= \sum_{\substack{i_{1:t}:\\i_t=j}} w(i_{1:t})(1-\alpha_t)\cdot e^{-\eta \loss(i_{1:t})} + \sum_{\substack{i_{1:t}\\i_t\neq j}} w(i_{1:t})\frac{\alpha_t}{m-1}\cdot e^{-\eta \loss(i_{1:t})}
    \\
    &\stackrel{\text{\ref{it:switching-prior-sum-proof-2}}}= \sum_{\substack{i_{1:t}:\\i_t=j}} w(i_{1:t}j)\cdot e^{-\eta \loss(i_{1:t})} + \sum_{\substack{i_{1:t}:\\i_t\neq j}} w(i_{1:t}j)\cdot e^{-\eta \loss(i_{1:t})}
    \\    
    &\stackrel{\text{\ref{it:switching-prior-sum-proof-3}}}=\sum_{\substack{i_{1:t+1}:\\i_{t+1}=j}} w(i_{i:t+1})\cdot e^{-\eta \loss(i_{1:t})}
    \text,
\)
where we 
\begin{enumerate*}[label=(\alph*)]
    \item\label{it:switching-prior-sum-proof-0}
    plugged the induction hypothesis for $t-1$ into \eqref{eq:switching-update},
    \item\label{it:switching-prior-sum-proof-1}
    used the definition of $\loss(i_{1:t})$ and rearranged,
    \item\label{it:switching-prior-sum-proof-2}
    used \eqref{eq:switching-prior} and
    \item\label{it:switching-prior-sum-proof-3}
    finally rearranged to conclude the proof.
\end{enumerate*}
\end{proof}

\noindent
Next, we introduce two invariants on $\beta$-weighted losses.

\begin{lem}[Total Weight Invariant]
\label{lem:switching-beta-sum}
For $t>0$ we have $\sum_{1\leq j\leq m} \beta^j_{t-1}\cdot e^{-\eta \ell^j_t} = \sum_{1\leq j\leq m}\beta^j_t$.
\end{lem}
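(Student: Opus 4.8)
The plan is to prove the identity by a direct computation: substitute the weight update \eqref{eq:switching-update} into the right-hand side $\sum_{j}\beta^j_t$ and simplify until the left-hand side $\sum_j \beta^j_{t-1}e^{-\eta\ell^j_t}$ emerges. First I would expand $\sum_{i=1}^m \beta^i_t$ using \eqref{eq:switching-update}, splitting it into a ``stay'' contribution $(1-\alpha_t)\sum_i \beta^i_{t-1}e^{-\eta\ell^i_t}$ and a ``switch'' contribution $\frac{\alpha_t}{m-1}\sum_i\sum_{j\neq i}\beta^j_{t-1}e^{-\eta\ell^j_t}$.

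The key step is to handle the double sum in the switch contribution by exchanging the order of summation. For each fixed index $j$, the term $\beta^j_{t-1}e^{-\eta\ell^j_t}$ is counted once for every $i\neq j$, that is exactly $m-1$ times. Hence the double sum collapses to $(m-1)\sum_j \beta^j_{t-1}e^{-\eta\ell^j_t}$, and the factor $m-1$ cancels the normalisation $\frac{1}{m-1}$, leaving the switch contribution equal to $\alpha_t\sum_j \beta^j_{t-1}e^{-\eta\ell^j_t}$. Adding this to the stay contribution, the coefficients $(1-\alpha_t)$ and $\alpha_t$ sum to $1$, so both pieces combine into the single sum $\sum_j \beta^j_{t-1}e^{-\eta\ell^j_t}$, which is precisely the claimed left-hand side.

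As for the main obstacle, there is essentially no deep difficulty here: the statement simply expresses conservation of the total $e^{-\eta\ell}$-weighted mass under one switching step, which holds because each unit of weight is either retained by its own expert or redistributed uniformly among the remaining $m-1$ experts, with nothing created or destroyed. The only point that demands care is the bookkeeping of the double summation, namely correctly identifying that each index $j$ is hit $m-1$ times so that the normalisation $\frac{1}{m-1}$ is exactly cancelled, and then verifying that the $\alpha_t$-dependent terms recombine into the coefficient $1$. One could alternatively derive the claim from Lemma~\ref{lem:switching-prior-sum} by summing the expression for $\beta^j_t$ over $j$ and exploiting that the prior mass $w$ sums to one over the final symbol $i_{t+1}$, but the direct computation above is shorter and fully self-contained, so I would present that.
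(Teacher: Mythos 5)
Your proposal is correct and follows essentially the same route as the paper's proof: substitute the update rule \eqref{eq:switching-update} into $\sum_j \beta^j_t$, swap the order of the double summation so that each term $\beta^j_{t-1}e^{-\eta\ell^j_t}$ is counted $m-1$ times (cancelling the $\frac{1}{m-1}$ normalisation), and combine the coefficients $(1-\alpha_t)+\alpha_t=1$. No differences worth noting.
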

\begin{proof}
We get
\(
    \sum_{1\leq j\leq m}\beta^j_t
    &
    \stackrel{\text{\ref{it:switching-beta-sum-0}}}= (1-\alpha_t)\cdot\sum_{1\leq j\leq m} \beta^j_{t-1}\cdot e^{-\eta\ell^j_t} + \frac{\alpha_t}{m-1}\sum_{1\leq j\leq m} \sum_{\substack{1\leq u\leq m:\\u\neq j}}\beta^u_{t-1} \cdot e^{-\eta\ell^u_t}
    \\
    &
    \stackrel{\text{\ref{it:switching-beta-sum-1}}}= (1-\alpha_t)\cdot\sum_{1\leq j\leq m} \beta^j_{t-1}\cdot e^{-\eta\ell^j_t} + \frac{\alpha_t}{m-1}\cdot\sum_{1\leq u\leq m} (m-1)\cdot e^{-\eta\ell^u_t}
    \\
    &
    \stackrel{\text{\ref{it:switching-beta-sum-2}}}= \sum_{1\leq j\leq m}\beta^j_{t-1}e^{-\eta \ell^j_t}
    \text,
\)
where we
\begin{enumerate*}[label=(\alph*)]
    \item\label{it:switching-beta-sum-0}
    substituted \eqref{eq:switching-update},
    \item\label{it:switching-beta-sum-1}
    changed the order of summation
    $
        \sum_{1\leq j\leq m} \sum_{u\neq j}z_u
        =\allowbreak \sum_{1\leq u\leq m}\sum_{j\neq u}z_u
        =\allowbreak(m-1)\sum_{1\leq u\leq m} z_u
    $
    and finally
    \item\label{it:switching-beta-sum-2}
    used $0\leq \alpha_t\leq 1$ and rearranged.
\end{enumerate*}
\end{proof}

\begin{lem}[Monotonicity Invariant]
\label{lem:switching-potential}
The term $\sum_{1\leq j\leq m}\beta^j_t\cdot e^{\eta \loss^{SW}_t}$, is decreasing in $t$.
\end{lem}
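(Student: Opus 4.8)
The plan is to study the ratio of consecutive potentials and reduce the claim to the exp-concavity of the loss. Write $\Phi_t := \sum_{1\leq j\leq m}\beta^j_t\cdot e^{\eta\loss^{SW}_t}$ for the quantity in question; the goal is to show $\Phi_t\leq\Phi_{t-1}$ for all $t>0$. First I would peel off the newest loss term using $\loss^{SW}_t = \loss^{SW}_{t-1} + \ell_t(\vy_t)$, so that $e^{\eta\loss^{SW}_t} = e^{\eta\loss^{SW}_{t-1}}\cdot e^{\eta\ell_t(\vy_t)}$, and then invoke the Total Weight Invariant (Lemma~\ref{lem:switching-beta-sum}) to rewrite $\sum_j\beta^j_t = \sum_j\beta^j_{t-1}e^{-\eta\ell^j_t}$. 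After cancelling the common positive factor $e^{\eta\loss^{SW}_{t-1}}$, the desired inequality $\Phi_t\leq\Phi_{t-1}$ becomes equivalent to
\[
    e^{\eta\ell_t(\vy_t)}\cdot\sum_{j}\beta^j_{t-1}e^{-\eta\ell^j_t} \leq \sum_{j}\beta^j_{t-1}
    \text.
\]

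Next I would normalize the weights, setting $p_j := \beta^j_{t-1}/\sum_k\beta^k_{t-1}$, so that $(p_j)_j$ is a probability vector and, by the prediction rule in Algorithm~\ref{fig:switching}, $\vy_t = \sum_j p_j\vy^j_t$. Dividing the displayed inequality through by $\sum_k\beta^k_{t-1}$ and by $e^{\eta\ell_t(\vy_t)}$ rearranges it to $e^{-\eta\ell_t(\vy_t)}\geq\sum_j p_j\,e^{-\eta\ell_t(\vy^j_t)}$. Finally, I would recognize this as a direct instance of Jensen's inequality: since each $\ell_t$ is $\eta$-exp-concave, the map $\vy\mapsto e^{-\eta\ell_t(\vy)}$ is concave, and Jensen applied with the distribution $(p_j)_j$ at the points $\vy^1_t,\dots,\vy^m_t$ gives $e^{-\eta\ell_t(\sum_j p_j\vy^j_t)}\geq\sum_j p_j e^{-\eta\ell_t(\vy^j_t)}$, which is exactly the needed bound once $\vy_t=\sum_j p_j\vy^j_t$ is substituted on the left.

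The bookkeeping in the first paragraph is routine, so I expect essentially all the content to sit in this last step; the crux is simply spotting that the Total Weight Invariant together with exp-concavity force the potential to be non-increasing. The one subtlety to handle with care is that all $\beta^j_{t-1}$ must be nonnegative, so that $(p_j)_j$ is genuinely a probability vector and the normalization is well defined; this follows immediately by induction from the nonnegative initialization $\beta^j_0=\tfrac1m$ and the weight update \eqref{eq:switching-update}, whose coefficients are nonnegative because $\alpha_t\in(0,1)$.
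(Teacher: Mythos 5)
Your proof is correct and is essentially the paper's own argument: both rest on combining the Total Weight Invariant (Lemma~\ref{lem:switching-beta-sum}) with Jensen's inequality applied to the concave map $\vy\mapsto e^{-\eta\ell_t(\vy)}$ at the $\beta_{t-1}$-weighted average $\vy_t$. The paper merely compresses your Jensen step into the single inequality $e^{-\eta\ell_t(\vy_t)}\geq\bigl(\sum_j\beta^j_{t-1}e^{-\eta\ell^j_t}\bigr)/\bigl(\sum_j\beta^j_{t-1}\bigr)$ and rearranges at the end rather than cancelling up front; your explicit treatment of weight positivity is a detail the paper leaves implicit.
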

\begin{proof}
For brevity let $L_t := \loss^{SW}_t$.
For $t>0$ we have
\(
    &e^{-\eta(L_t - L_{t-1})}
    \stackrel{\text{\ref{it:potential-proof-0}}}= e^{-\eta\ell_t(\vy_t)}
    \stackrel{\text{\ref{it:potential-proof-1}}}\geq \frac{\sum_{1\leq j \leq m}\beta^j_{t-1}\cdot e^{-\eta\ell^j_t}}{\sum_{1\leq j\leq m}\beta^j_{t-1}} 
    \stackrel{\text{\ref{it:potential-proof-2}}}=\frac{\sum_{1\leq j\leq m}\beta^j_t}{\sum_{1\leq j\leq m}\beta^j_{t-1}}
    \\
    \stackrel{\text{\ref{it:potential-proof-3}}}\iff&
    \sum_{1\leq j\leq m}\beta^j_{t-1}\cdot e^{\eta L_{t-1}}
    \geq
    \sum_{1\leq j\leq m}\beta^j_t\cdot e^{\eta L_t}
    \text,
\)
where we used
\begin{enumerate*}[label=(\alph*)]
    \item\label{it:potential-proof-0}
    the definition of $\loss^{SW}_t$,
    \item\label{it:potential-proof-1}
    that $\ell_t$ is $\eta$-exp-concave and the definition of $\ell^j_t$,
    \item\label{it:potential-proof-2}
    Lemma~\ref{lem:switching-beta-sum} and finally
    \item\label{it:potential-proof-3}
    rearranged.
\end{enumerate*}
\end{proof}

\paragraph{Loss Bound.}
We are now ready to state and prove that the loss Switching incurrs is not much worse that that of any (hence, the best in hindsight) switching sequence.

\begin{thm}[Switching Loss]
\label{thm:switching-regret-bound}
For any switching sequence $i_{1:T}$ over $\{1, 2, \dots, m\}$ we have
\(
    \loss_T^{SW}
    &\leq \loss(i_{1:T})
    \\
    &\quad\quad+ \frac1\eta\left[\log m + \sz\sT\log(m-1) + \sum_{t\in\sT}\log\frac1\alpha_t + \smash{\sum_{\mathclap{\substack{t\notin\sT,\\1\leq t< T}}}}\log\frac1{1-\alpha_t}\right]
    \text,
\)
where $\sT := \{1\leq t < T : i_t\neq i_{t+1}\}$.
(Note that sums range over $1\leq t<T$.)
\end{thm}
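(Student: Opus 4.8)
The plan is to run the standard ``potential'' argument for prediction with expert advice, using the two invariants already proven: the Monotonicity Invariant (Lemma~\ref{lem:switching-potential}) to bound $\loss^{SW}_T$ by $-\tfrac1\eta\log\sum_j\beta^j_T$, and the expectation representation of the weights (Lemma~\ref{lem:switching-prior-sum}) to lower-bound $\sum_j\beta^j_T$ in terms of the prior mass of the competing switching sequence. The final step is then a purely mechanical evaluation of $-\log w(i_{1:T})$.

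First I would telescope the Monotonicity Invariant. Since $\sum_{1\leq j\leq m}\beta^j_t\cdot e^{\eta\loss^{SW}_t}$ is decreasing in $t$, comparing $t=T$ with $t=0$ and using $\beta^j_0=\tfrac1m$ together with $\loss^{SW}_0=0$ gives
\[
    \sum_{1\leq j\leq m}\beta^j_T\cdot e^{\eta\loss^{SW}_T}
    \leq \sum_{1\leq j\leq m}\beta^j_0\cdot e^{\eta\loss^{SW}_0}
    = 1
    \text.
\]
Rearranging yields $\loss^{SW}_T\leq -\tfrac1\eta\log\sum_{1\leq j\leq m}\beta^j_T$, so it remains only to lower-bound the total weight $\sum_j\beta^j_T$.

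Next I would apply Lemma~\ref{lem:switching-prior-sum} at $t=T$ and sum over $j$, which collapses the constraint $i_{t+1}=j$ and gives $\sum_j\beta^j_T=\sum_{i_{1:T+1}}w(i_{1:T+1})\cdot e^{-\eta\loss(i_{1:T})}$. Since every summand is nonnegative, I would keep only the terms that extend the fixed competitor $i_{1:T}$, i.e. the sequences $i_{1:T}i_{T+1}$. The key observation is that the switching prior \eqref{eq:switching-prior} marginalizes over its last symbol: one choice $i_{T+1}=i_T$ contributes a factor $1-\alpha_T$ and the remaining $m-1$ choices each contribute $\tfrac{\alpha_T}{m-1}$, so $\sum_{i_{T+1}}w(i_{1:T}i_{T+1})=w(i_{1:T})\cdot\big[(1-\alpha_T)+\alpha_T\big]=w(i_{1:T})$. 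This gives $\sum_j\beta^j_T\geq w(i_{1:T})\cdot e^{-\eta\loss(i_{1:T})}$, and substituting back yields $\loss^{SW}_T\leq\loss(i_{1:T})-\tfrac1\eta\log w(i_{1:T})$.

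It then remains to unroll \eqref{eq:switching-prior}: $w(i_{1:T})$ equals $\tfrac1m$ times a product over the $T-1$ transitions $t=1,\dots,T-1$, where a non-switch contributes $1-\alpha_t$ and a switch contributes $\tfrac{\alpha_t}{m-1}$. Taking $-\tfrac1\eta\log$ and splitting each switch factor as $\log(m-1)+\log\tfrac1{\alpha_t}$ reproduces exactly the bracketed expression, since $\sT$ indexes the switches and its complement within $1\leq t<T$ indexes the non-switches. The main subtlety here is conceptual rather than computational: lower-bounding $\sum_j\beta^j_T$ by marginalizing $w$ over the final symbol (instead of committing to a single extension such as $i_{T+1}=i_T$) is precisely what charges the bound only for the $T-1$ genuine transitions and avoids a spurious penalty for a transition at time $T$; the rest is routine telescoping and bookkeeping.
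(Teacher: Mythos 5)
Your proposal is correct and follows essentially the same route as the paper's own proof: telescoping the monotonicity invariant from $\phi_0=0$, rewriting $\beta^j_T$ via the expectation representation (Lemma~\ref{lem:switching-prior-sum}), restricting the sum to extensions $i_{1:T}j$ of the competitor, marginalizing the prior over the final symbol so that $\sum_j w(i_{1:T}j)=w(i_{1:T})$, and then evaluating $-\log w(i_{1:T})$ explicitly. The only difference is presentational (you work with the exponentiated potential directly rather than its logarithm), and your closing observation about marginalizing rather than committing to a single extension is exactly the paper's step as well.
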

\begin{proof}
For brevity let $L_t := \loss^{\text{SW}}_t$.
We define the potential $\phi_t := \frac1\eta\log\sum_{1\leq j\leq m}\beta^j_t\cdot e^{\eta L_t}$ and obtain
\(
    0
    \stackrel{\text{\ref{it:switching-regret-bound-proof-0}}}= \phi_0
    \stackrel{\text{\ref{it:switching-regret-bound-proof-1}}}\geq \phi_T
    &= \frac1\eta\log\sum_{1\leq j\leq m}\beta^j_t\cdot e^{\eta L_T}
    \\
    &\stackrel{\text{\ref{it:switching-regret-bound-proof-2}}}=\frac1\eta\log\sum_{1\leq j\leq m} \sum_{\substack{i_{1:T+1}':\\i_{T+1}'=j}} w(i_{1:T+1}')\cdot e^{-\eta (\loss(i_{1:T}')- L_T)}
    \\
    &\stackrel{\text{\ref{it:switching-regret-bound-proof-3}}}\geq\frac1\eta\log\sum_{1\leq j\leq m} w(i_{1:T}j)\cdot e^{-\eta(\loss(i_{1:T})- L_T)}
    \\
    &\stackrel{\text{\ref{it:switching-regret-bound-proof-4}}}=\frac1\eta\log\left(w(i_{1:T})\cdot e^{-\eta(\loss(i_{1:T})-L_T)}\right)
    \\
    &\stackrel{\text{\ref{it:switching-regret-bound-proof-5}}}= L_T - \loss(i_{1:T}) - \frac1\eta\log\frac1{w(i_{1:T})}
    \text,
    \label{eq:switching-regret-bound-proof-0}
\)
where we used
\begin{enumerate*}[label=(\alph*)]
    \item\label{it:switching-regret-bound-proof-0}
    $\sum_{1\leq j\leq m}\beta^j_0 = 1$ and $L_0 = 0$,
    \item\label{it:switching-regret-bound-proof-1}
    by Lemma~\ref{lem:switching-potential} $\phi_t$ is decreasing in $t$,
    \item\label{it:switching-regret-bound-proof-2}
    Lemma~\ref{lem:switching-beta-sum}
    \item\label{it:switching-regret-bound-proof-3}
    dropping all terms $i_{1:T+1}' \neq i_{1:T}j$ in the innermost sum,
    \item\label{it:switching-regret-bound-proof-4}
    $w(i_{1:T}) =\allowbreak \sum_{1\leq j\leq m} w(i_{1:T}j)$ and 
    \item\label{it:switching-regret-bound-proof-5}
    rearranged.
\end{enumerate*}
From \eqref{eq:switching-prior} it is easy to see that
\(
    w(i_{1:T}) = \frac1m\frac1{(m-1)^{\sz\sT}} \cdot\prod_{t\in\sT}\alpha_t \cdot \prod_{\mathclap{\substack{t\notin\sT,\\1\leq t < T}}}(1-\alpha_t)
    \text,
\)
which we plug into \eqref{eq:switching-regret-bound-proof-0} and rearrange to end the proof.
\end{proof}

\noindent
Note that the regret term in Theorem~\ref{thm:switching-regret-bound} has a natural interpretation in the light of encoding the competing switching sequence $i_{1:T}$:
First, we pay $\log m$ bits to encode $i_1$.
Second, for either of the $\sz\sT$ switches we pay $\log(m-1)$ bits to encode switching from the current expert $j$ to one of the other $m-1$ experts from $\{1, 2, ...\dots m\}\setminus\{j\}$.
Finally, for positions $1 < t < T$ we encode whether a switch occurs from $t$ to $t+1$, paying $\log\frac1\alpha_t$ bits for a switch ($i_t\neq i_{t+1}$) and $\log\frac1{1-\alpha_t}$ for no switch ($i_t= i_{t+1}$).

\begin{cor}
\label{cor:switching-two-expert-regret-bound}
For a binary switching sequence $i_{1:T}$ with $n$ switches and switching rate $\alpha_t = (t+1)^{-1}$ we have
\(
    \loss^{\text{SW}}_T \leq \loss(i_{1:T}) + \frac1\eta \left[1 + (n+1) \log T\right]
    \text.
\)
\end{cor}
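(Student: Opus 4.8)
The plan is to specialize Theorem~\ref{thm:switching-regret-bound} to the two-expert case and simplify the four terms inside the regret bracket. Setting $m=2$ immediately eliminates one term, since $\log(m-1)=\log 1 = 0$, and leaves the leading term as $\log m = \log 2$. With the prescribed switching rate $\alpha_t = (t+1)^{-1}$ the two remaining sums become $\sum_{t\in\sT}\log(t+1)$ over the switch positions and $\sum_{t\notin\sT,\,1\leq t<T}\log\frac{t+1}{t}$ over the non-switch positions, using $1-\alpha_t = t/(t+1)$. Since $\sz\sT = n$ by assumption, it then remains to bound these two sums by $n\log T$ and $\log T$ respectively.

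For the switch sum, I would note that every $t\in\sT$ satisfies $t<T$, hence $t+1\leq T$ and $\log(t+1)\leq\log T$; summing over the $n$ switch positions yields at most $n\log T$. For the non-switch sum, the key observation is that every summand $\log\frac{t+1}{t}$ is positive, so dropping the restriction $t\notin\sT$ only enlarges the sum; the resulting full sum telescopes as $\sum_{t=1}^{T-1}\log\frac{t+1}{t} = \log\frac{T}{1} = \log T$. Finally, using $\log 2 \leq 1$ and combining the three contributions gives the bracket bound $\log 2 + n\log T + \log T \leq 1 + (n+1)\log T$, which is precisely the claimed quantity after multiplying through by $\frac1\eta$.

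I expect no genuine obstacle here, as the corollary is a direct specialization. The only mildly non-obvious step is recognizing that the non-switch sum should be bounded by the \emph{full} telescoping sum over all $1\leq t<T$ rather than handled position-by-position, since this cleanly absorbs the unknown locations of the switches into a single closed form. The bound $\log 2 \leq 1$ is what produces the additive constant~$1$ in the final expression.
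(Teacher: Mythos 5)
Your proof is correct and follows essentially the same route as the paper's: specialize Theorem~\ref{thm:switching-regret-bound}, bound each switch term $\log\frac1{\alpha_t}=\log(t+1)$ by $\log T$, and bound the non-switch sum by the full telescoping sum $\sum_{1\leq t<T}\log\frac{t+1}{t}=\log T$. Your write-up is in fact slightly more careful than the paper's one-line proof, since you make explicit the steps $\log(m-1)=0$, $\log 2\leq 1$, and the correct form $1-\alpha_t = t/(t+1)$ of the telescoping terms.
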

\begin{proof}
We combine Theorem~\ref{thm:switching-regret-bound} with $\log\frac1\alpha_t \leq \log T$ and $\sum_{t\notin\sT} \log\frac1{1-\alpha_t} \leq \sum_{1\leq t< T} \log\frac t{t-1} = \log T$.
\end{proof}

\end{document}